\documentclass[runningheads]{llncs}
\usepackage{eccv}
\usepackage{eccvabbrv}

\usepackage[T1]{fontenc}
\usepackage{graphicx}

\usepackage{amsmath,amssymb,mathtools}
\usepackage{booktabs}
\usepackage{multirow}
\usepackage{microtype}

\usepackage[hidelinks]{hyperref}
\hypersetup{
  pdftitle={When 3D Gaussian Splatting Recovers Real Surfaces},
  pdfauthor={Songhe Wang and David Miller},
  pdfkeywords={3D Gaussian Splatting, geometric identifiability, spherical harmonics, light fields}
}
\AtBeginDocument{\crefname{assumption}{assumption}{assumptions}\Crefname{assumption}{Assumption}{Assumptions}}

\spnewtheorem{assumption}{Assumption}{\bfseries}{\itshape}

\newcommand{\Sph}{\mathbb S^2}
\newcommand{\R}{\mathbb R}
\newcommand{\E}{\mathbb E}
\newcommand{\PP}{\mathbb P}

\newcommand{\cJ}{\mathcal J}
\newcommand{\cV}{\mathcal V}

\newcommand{\norm}[1]{\left\lVert #1\right\rVert}

\DeclareMathOperator*{\dist}{dist}

\DeclareMathOperator{\Var}{Var}

\begin{document}
% ============================================================

\title{When 3D Gaussian Splatting Recovers Real Surfaces}
\titlerunning{When 3D Gaussian Splatting Recovers Real Surfaces}

\author{
Songhe Wang\inst{1} \and David Miller\inst{1}
}
\authorrunning{S. Wang and D. Miller}
\institute{
School of EECS, Penn State, University Park, PA, USA\\
\email{\{sxw5765,djm25\}@psu.edu}
}

\maketitle

\begin{figure*}[t]
  \centering
  \newlength{\colw}
  \setlength{\colw}{.20\textwidth} % three equal columns (slightly smaller)

  % ---- column headers (once) ----
  \begin{minipage}[t]{\colw}\centering \textbf{Ground Truth}\end{minipage}\hfill
  \begin{minipage}[t]{\colw}\centering \textbf{SH=3}\\Geometry Reconstruction\end{minipage}\hfill
  \begin{minipage}[t]{\colw}\centering \textbf{SH=24}\\Billboard Failure\end{minipage}

  % ================= (a) Appearance =================
  \begin{subfigure}{\textwidth}
    \centering
    \begin{minipage}[t]{\colw}\centering
      \includegraphics[width=\linewidth]{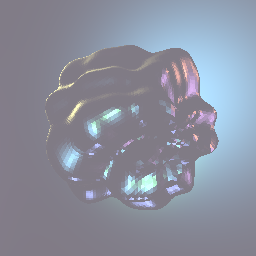}\\[.25ex]
      (Reference)
    \end{minipage}\hfill
    \begin{minipage}[t]{\colw}\centering
      \includegraphics[width=\linewidth]{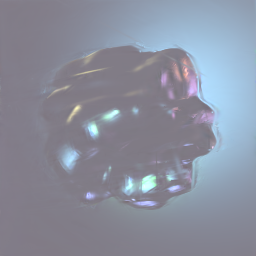}\\[.25ex]
      PSNR: 34.9\,dB (\(\uparrow\))
    \end{minipage}\hfill
    \begin{minipage}[t]{\colw}\centering
      \includegraphics[width=\linewidth]{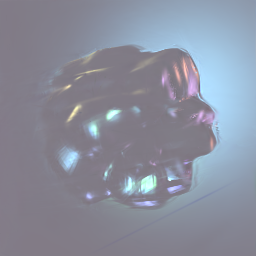}\\[.25ex]
      PSNR: 35.2\,dB (\(\uparrow\))
    \end{minipage}
    \caption{Rendering}
  \end{subfigure}

  % ================= (b) Geometry =================
  \begin{subfigure}{\textwidth}
    \centering
    \begin{minipage}[t]{\colw}\centering
      \includegraphics[width=\linewidth]{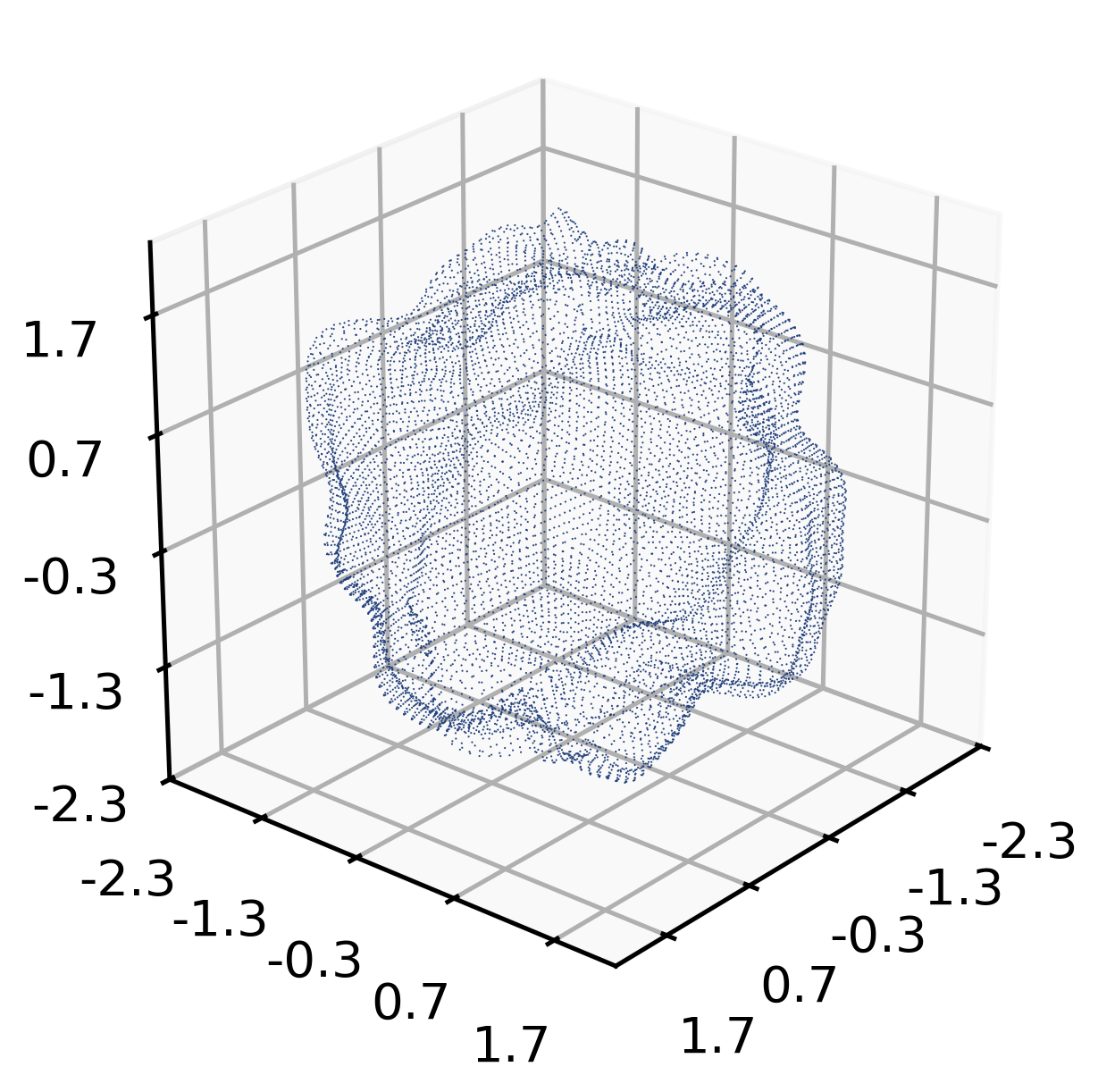}\\[.25ex]
      (Reference)
    \end{minipage}\hfill
    \begin{minipage}[t]{\colw}\centering
      \includegraphics[width=\linewidth]{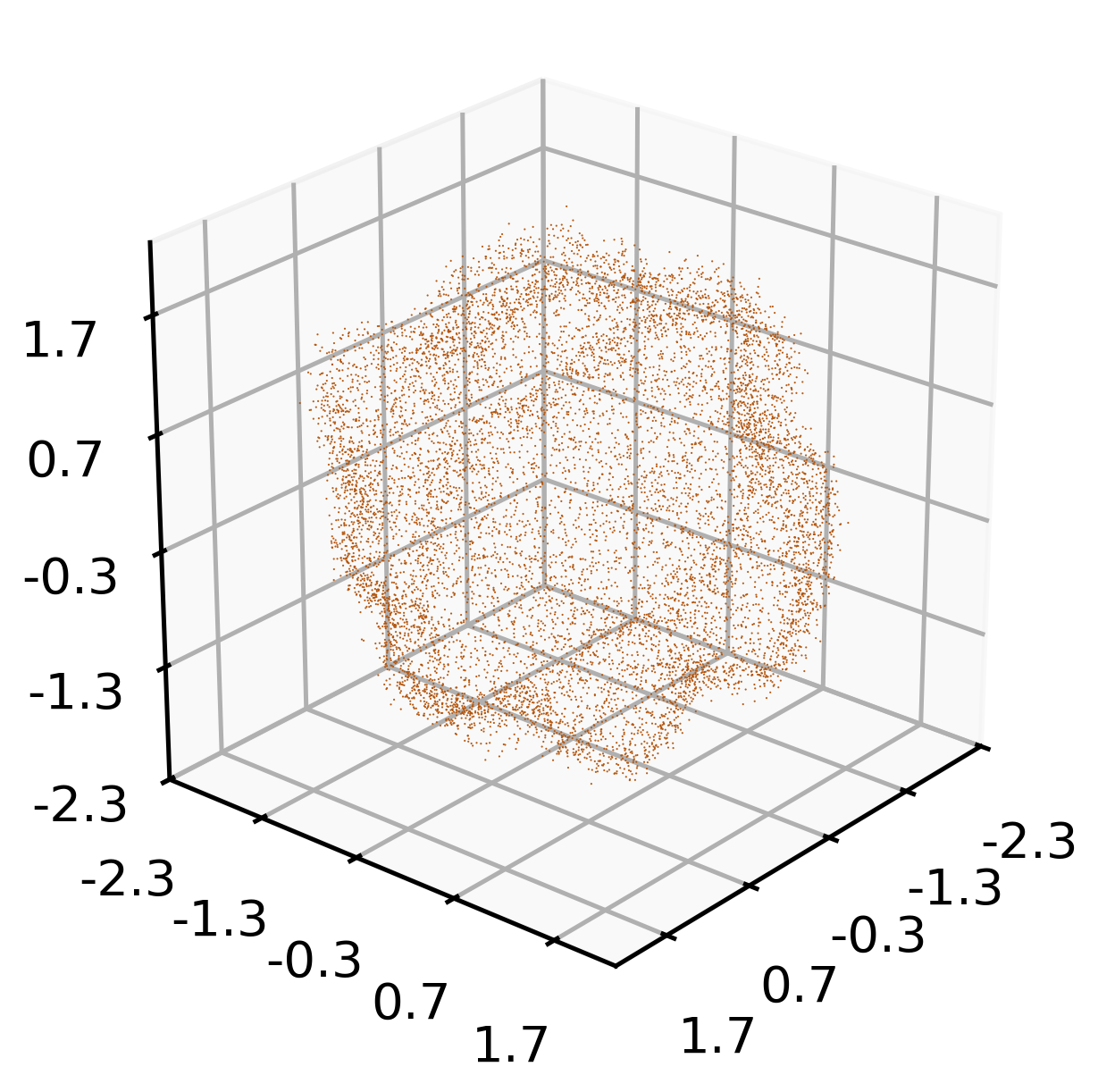}\\[.25ex]
      nCD: 0.03 (\(\downarrow\))
    \end{minipage}\hfill
    \begin{minipage}[t]{\colw}\centering
      \includegraphics[width=\linewidth]{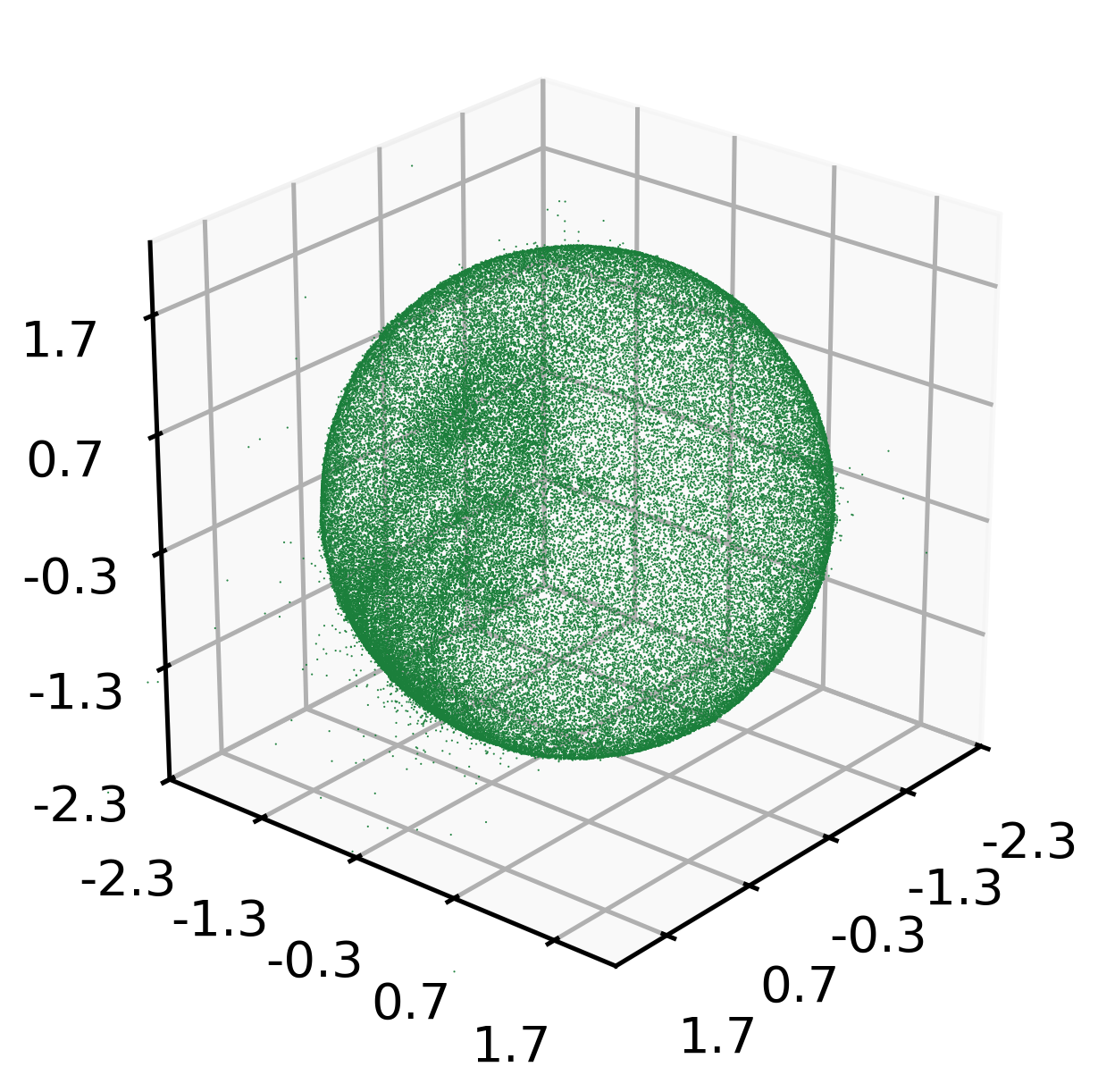}\\[.25ex]
      nCD: 6.28 (\(\downarrow\))
    \end{minipage}
    \caption{Geometry}
  \end{subfigure}

\caption{\textbf{Billboard failure in 3D Gaussian Splatting.}
Top: a glinty ground-truth object (left) and two 3DGS reconstructions with low
spherical-harmonic order (SH$=3$, middle) and high order (SH$=24$, right).  
Both reconstructions achieve high image PSNR, but the underlying 3D shape is
completely different.  
Bottom: visualizing Gaussian centers as point clouds shows that SH$=3$ recovers
the detailed surface with a small nCD distance, whereas SH$=24$ collapses
into an off-surface opaque billboard-like layer whose nCD error is much larger.  
This paradox—good appearance yet catastrophic geometry as radiance capacity
increases—is the central phenomenon that our theory explains and our experiments
systematically characterize in this paper.}
  \label{fig:my_comparison}
  \label{fig:dataset_shape}
\end{figure*}

\begin{abstract}
When does 3D Gaussian Splatting (3DGS) recover the true scene surface rather than just overfitting view-dependent appearance? We answer this by developing a mathematical framework—based on a first-hit rendering abstraction—that cleanly isolates geometry from appearance. We prove that geometric misalignment forcefully converts spatial textures into high-frequency angular signals via parallax. This establishes a strict identifiability window: if angular capacity is bounded, surface-consistent solutions are mathematically preferred; if unrestricted, the same images can be perfectly explained by an incorrect, opaque ``billboard'' geometry. Experiments on synthetic stress tests confirm this prediction, showing billboard failures emerge precisely at high angular capacities. Conversely, in the real-world datasets we evaluate under standard capture protocols, reconstructions remain surface-consistent even at high SH degrees, which is consistent with the prediction that rich spatial texture can push billboard solutions outside the tested angular-capacity range.
\keywords{3D Gaussian Splatting \and Geometric Identifiability \and Spherical Harmonics \and Light Fields}
\end{abstract}

\section{Introduction}

3D Gaussian Splatting (3DGS) renders high-quality views in real time and has rapidly become a default primitive for 3D reconstruction and editing~\cite{kerbl2023gaussiansplatting}. 
Crucially, modern pipelines no longer treat 3DGS as a hidden renderer that only produces images: they increasingly export the learned Gaussians as geometry and build downstream systems on top of them. 
On the one hand, a growing line of work treats 3DGS as a surrogate surface representation and demonstrates that high-quality triangle meshes can be extracted from learned Gaussian fields, making 3DGS a central building block for geometry processing and reconstruction~\cite{guedon2024sugar,wolf2024gs2mesh,lyu2024threedgsr,wu2024gsrec,wang2024gaussurf,chen2024pgsr,shen2024solidgs,wang2024omegas,ma2025mags,wu2024survey,luo2024review,bao2024survey}. 
Across these lines of work, the learned Gaussian field is implicitly treated as if it were the true scene geometry: meshes are carved out of it, and downstream edits are applied on top of it. Yet all of this evidence is essentially \emph{empirical}: to our knowledge there is no theoretical guarantee that the standard 3DGS objective actually recovers the underlying surface. Without such a guarantee, treating 3DGS as ground-truth geometry is methodologically fragile: a model that fits all training views while drifting off the true surface could silently corrupt downstream meshes and contact manifolds. This raises a basic question of \emph{geometric trustworthiness}: under what conditions are the learned Gaussians truly surface-consistent, so that reconstruction and editing built on top of them rest on a sound geometric foundation? Here, \emph{surface-consistent} means geometrically close to the true surface under nCD, not merely photometrically accurate.

In this work, we address this question from a theoretical point of view. We build a mathematical framework that isolates the interaction between a scene's spatial geometry and its view-dependent appearance. Specifically, we abstract the rendering process to analyze the best-achievable population loss for any candidate surface under a bounded \emph{angular capacity}---a strict limit on how complex the view-dependent colors are allowed to be. Within this framework, we demonstrate that optimizing appearance essentially reduces to a mathematical projection problem. This allows us to rigorously compare the approximation error of the true geometry against any misaligned geometry by evaluating their required angular frequencies.

Through this theoretical analysis, we predict a severe and counterintuitive vulnerability, which we refer to as the \emph{opaque billboard} failure mode. Our proofs reveal that if the angular capacity is allowed to be excessively large, a geometrically misaligned surface can exploit high-frequency view-dependent colors to perfectly mimic the spatial details of the true background. In this regime, an artificial interior sheet can block the rays and paint the correct colors using high angular capacity, so the images look perfectly right even though the geometry is completely wrong (see Fig.~\ref{fig:my_comparison}). Conversely, our theory guarantees that if the angular capacity is appropriately restricted, the model is mathematically incapable of sustaining this illusion, ensuring the true geometry remains strictly easier to fit.

On the empirical side, we instantiate these theoretical predictions in practice. We construct a controlled synthetic stress-test dataset with tunable object complexity to deliberately probe this vulnerability. As predicted by our theoretical framework, training 3DGS with large angular capacity reliably triggers opaque billboard solutions: rendering quality remains near-optimal, while the geometry drifts completely off the true surface. In our real-scene benchmarks, the observed spatial texture complexity appears to push the induced frequency $\omega_{\text{fake}}$ beyond the SH degrees we test (e.g., up to $L=24$). Under these standard training setups and moderate capacities, we do not observe opaque billboard failures.

As a summary, our contributions are:
\begin{itemize}
\item We provide a mathematical framework for 3D Gaussian Splatting that cleanly isolates geometry from appearance, turning the geometric identifiability problem into a rigorous frequency comparison.
\item Within this framework, we mathematically predict the \emph{opaque billboard} failure mode. We prove that given unbounded or large angular capacity, an incorrect surface can perfectly match training images by memorizing spatial textures as view-dependent signals.
\item We establish a theoretical \emph{identifiability window}, proving that strictly bounding angular capacity is mathematically necessary to reject billboards and guarantee accurate surface recovery.
\item We complement the theory with controlled diagnostics, demonstrating that opaque billboard failures can be deterministically forced in synthetic stress tests, while no opaque-billboard cases appear in our real-scene diagnostics.
\end{itemize}

\paragraph{Scope.}
Although framed around 3DGS, the mechanism applies to any proxy or surface representation with a finite-degree SH radiance head; 3DGS is our motivating instance.

\section{Related Work}

\paragraph{Meshes and downstream interactions in 3DGS.}
Many methods extract surfaces from 3DGS using surface alignment~\cite{guedon2024sugar, wolf2024gs2mesh}, 2D Gaussians or surfels~\cite{huang2024_2dgs,dai2024_gaussiansurfels}, SDFs~\cite{lyu2024threedgsr, wu2024gsrec}, geometric priors~\cite{wang2024gaussurf, chen2024pgsr}, sparse adaptations~\cite{shen2024solidgs, wang2024omegas}, or hybrid optimization~\cite{ma2025mags}, as surveyed in~\cite{wu2024survey,luo2024review,bao2024survey}. Others use Gaussians for dynamics, contact, and deformation~\cite{xie2024physgaussian, feng2025gaussiansplashing, muller2007pbd, jiang2024vrgs,borycki2024gasp, guo2025pgc, rong2025gaussiangarments, vasile2025asdiffmpm}. These applications rely on learned Gaussians as geometry, yet standard 3DGS~\cite{kerbl2023gaussiansplatting} lacks surface-recovery guarantees.

\paragraph{Plenoptic bandwidth and spherical harmonics.}
Classical studies in image-based rendering showed that view synthesis with imperfect proxy geometry converts spatial texture variation into angular variation, tying sampling requirements to depth and disparity~\cite{levoy1996lightfield,gortler1996lumigraph,chai2000plenoptic,lin2000geometry,buehler2001unstructured,durand2005fourier,do2011camera,nguyen2009plenoptic}. To model angular radiance efficiently, low-order spherical harmonics (SH) have long been used~\cite{ramamoorthi2001irradiance,sloan2002practical}. In 3DGS, SH coefficients serve as a finite-dimensional view-dependent radiance head. Our contribution is not to rederive plenoptic bandwidth or SH capacity from scratch, but to reinterpret this frequency transport as a geometric-identifiability condition. We study how a finite SH head can prevent or permit a wrong proxy surface to explain parallax-induced color variation.

\paragraph{Shape--radiance ambiguity in neural fields.}
Neural radiance-field methods also expose ambiguities between geometry and view-dependent appearance~\cite{mildenhall2020nerf, zhang2020nerfpp,verbin2022refnerf}. Related analyses have observed that color estimation and geometry can be coupled in ways that obscure shape recovery~\cite{fang2023colorest}. Our analysis gives a complementary parallax-frequency explanation: bounded angular capacity creates a regime where the true surface is easier to fit than a misaligned proxy, whereas excessive capacity can remove that preference.

\section{Background and Motivation}\label{sec:background}
% ============================================================

To formally analyze the relationship between angular capacity and geometric correctness, we must first establish how 3D Gaussian Splatting (3DGS) couples geometry with appearance, and how we can mathematically abstract this coupling to expose its core vulnerabilities.

\subsection{3DGS, View-Dependence, and Angular Capacity}

In 3DGS~\cite{kerbl2023gaussiansplatting}, a scene is not represented by continuous meshes or implicit neural fields, but by an unstructured collection of 3D Gaussian primitives. Each primitive is defined by explicit geometric parameters---a 3D center, a covariance matrix dictating its scale and rotation, and an opacity value. To render a novel view, these primitives are splatted onto the image plane and alpha-composited along camera rays.

Crucially, real-world surfaces exhibit view-dependent appearance, such as specular highlights that shift as the observer moves. To model this, 3DGS does not assign a single static RGB value to a Gaussian. Instead, the emitted color of each primitive is a function of the viewing direction $\hat{v} \in \Sph$. This view-dependence is parameterized using Spherical Harmonics (SH). For each color channel, a Gaussian stores a set of coefficients $\{c_{\ell m}\}$, and the color is evaluated as a linear combination of basis functions:
\[
\text{color}(\hat v)\ =\ \sum_{\ell=0}^{L}\sum_{m=-\ell}^{\ell} c_{\ell m} Y_{\ell m}(\hat v).
\]
Here, $Y_{\ell m}$ denotes the real Spherical Harmonic basis function of degree $\ell$ and order $m$. Conceptually, these basis functions serve as the angular equivalent of a Fourier series on the sphere. The degree $\ell$ dictates the spatial frequency of the signal: $\ell=0$ represents a purely diffuse, view-independent color, while higher values of $\ell$ allow the function to represent sharper, more rapid changes in color as the camera angle shifts. 

By truncating this summation at a maximum degree $L$, the architecture enforces a strict low-pass filter on the primitive's angular capacity. This truncation defines a specific function space, which we denote as:
\[
\cV_L:=\mathrm{span}\{Y_{\ell m}:\ 0\le \ell\le L,\ -\ell\le m\le \ell\}.
\]
Here, we only allow SH terms up to degree $\ell \le L$. Consequently, the view-dependent color function of each Gaussian is strictly restricted to a finite-dimensional subspace. A larger $L$ simply expands this space, allowing the model to represent more complex angular dependencies.

\subsection{Isolating the Phenomenon: The First-Hit Abstraction}

Standard 3DGS rendering relies on volumetric alpha-blending, where the final pixel color is accumulated from multiple semi-transparent Gaussians along a ray. While necessary for gradient flow during early optimization, volumetric compositing obscures the mathematical limits of geometric identifiability by entangling visibility, occlusion, and depth-ordering.

However, the specific failure mode we aim to explain---where a grossly misaligned ``billboard'' surface fits the training images by painting fake, view-dependent textures---is fundamentally a local, surface-level deception. It occurs because the optimizer places dense, opaque geometry in the wrong physical location, and relies on the SH coefficients of that specific geometry to perfectly mimic the background rays passing through it.

To rigorously study this geometry-appearance tradeoff, we simplify the rendering process to a \textbf{first-hit abstraction}. We assume the optimized geometry forms an opaque surface, meaning the color of any ray is entirely determined by the appearance function of the very first point it intersects. To ground our first-hit abstraction in empirical reality, we conducted a pilot experiment profiling the actual per-ray alpha-compositing weights of a trained 3DGS model. This pilot study reveals an extreme depth-wise concentration: for 92.6\% of foreground rays, the front-most 3 contributing Gaussians systematically capture over 90.2\% of compositing mass. This statistic should not be interpreted as proving that 3DGS rendering is generally first-hit or opaque. Rather, it provides empirical motivation that, in the mostly opaque scenes used in our controlled experiments, the learned compositing weights are often highly concentrated near the front-most contributors. The first-hit model is therefore best understood as an analytically tractable abstraction of one important regime, not as a faithful model of all 3DGS behavior. For this pilot, we sort contributing Gaussians by depth along each sampled foreground ray and measure the cumulative alpha-compositing mass of the first three contributors.

This abstraction is analytically highly self-consistent because it preserves the exact structural mechanism that enables billboards. It cleanly isolates the core interaction into two distinct roles:
\begin{itemize}
\item \textbf{Geometry:} Which 3D point does the model claim the ray hits first?
\item \textbf{Appearance:} What color must the angular function $\cV_L$ emit at that specific point to satisfy the training data?
\end{itemize}

% ============================================================
\section{Setup and the Geometry-Fixed Projection Principle}\label{sec:setup}
% ============================================================

To formally evaluate whether the optimization landscape prefers the true geometry over a misaligned one, we abstract the 3DGS rendering process into a minimal \emph{first-hit} model. This isolates the core geometry-appearance interaction without the confounding effects of volumetric alpha-blending.

\subsection{Minimal First-Hit Model and Competing Error Curves}

We parameterize a camera ray by its origin $y\in\R^3$ and viewing direction $\hat v\in\Sph$. Let $S, U \subset\R^3$ denote the true scene geometry and a candidate (potentially misaligned) surface, with respective first-hit points $x_S(y,\hat v)$ and $x_U(y,\hat v)$. The ground-truth pixel observation is driven by the true physical appearance $C(x,\hat v)$ defined on the true surface manifold $S$, yielding $I^*(y,\hat v) := C(x_S(y,\hat v),\hat v)$. If our model commits to the candidate geometry $U$, it must assign an appearance function $g(x,\hat v)$ strictly constrained by the angular capacity $L$ (i.e., $g(x,\cdot)\in\cV_L$). The model's prediction is thus $I_{U,g}(y,\hat v) := g(x_U(y,\hat v),\hat v)$.

Let $(Y,\hat{V})\sim \PP$ be a random training ray sampled from the distribution, and define $X_U:=x_U(Y,\hat{V})$. We call this expected image-space MSE over the ray distribution the \emph{population loss}: $\cJ(U,g) := \E_{(Y,\hat{V})\sim \PP}[\,|I_{U,g}(Y,\hat{V})-I^*(Y,\hat{V})|^2\,]$. To determine geometric identifiability, we abstract away specific optimizer dynamics and directly evaluate the \emph{best-achievable error} for both the true and candidate surfaces under the same angular capacity $L$:
\begin{align}
 A(L) &:= \inf_{g:\ g(x,\cdot)\in\cV_L}\ \cJ(S,g), \label{eq:A_def}\\
 B_U(L) &:= \inf_{g:\ g(x,\cdot)\in\cV_L}\ \cJ(U,g). \label{eq:B_def}
\end{align}
In plain terms, $A(L)$ measures how well degree $L$ fits the real appearance given perfect geometry, while $B_U(L)$ measures how well degree $L$ can fake the appearance to compensate for wrong geometry. The entire identifiability narrative reduces to identifying a mathematical regime where $A(L) \ll B_U(L)$.

\subsection{Geometry-Fixed Optimality as Pointwise Projection}

To evaluate $A(L)$ and $B_U(L)$ without explicitly solving the global optimization, we must first identify the mathematically optimal appearance function for a fixed geometry. Conditioned on hitting a specific physical point $x \in U$, we define the through-seen target $F_U(x, \cdot)$ as the geometry-induced angular target: the colors a candidate point must emit across views to reproduce the true scene. Formally, it is the conditional expectation of the ground-truth image color over the camera origins $Y$, given the specific hit point and direction $\hat{V}$:
\begin{equation}
F_U(x, \cdot) := \mathbb{E}\big[\,I^*(Y, \hat{V}) \mid X_U=x, \hat{V}\,\big]
\label{eq:FU_def}
\end{equation}
which yields $F_U(x, \hat{v})$ when evaluated at a specific direction $\hat{v}$. This defines the exact regression function in the angular domain that the training data demands at that location.

Framing the target this way reduces (in our abstraction) the daunting global optimization to independent, local geometric projections.

\begin{proposition}[Pointwise projection]\label{prop:pointwise-proj}
For a fixed surface $U$ and SH degree $L$, the global minimizer of $\cJ(U,g)$ is given pointwise by $g^*(x,\cdot) = \Pi^{(x)}_{\cV_L}(F_U(x,\cdot))$, where $\Pi^{(x)}_{\cV_L}$ is the orthogonal projection onto $\cV_L$ under the $L^2$ inner product weighted by the conditional viewing distribution at $x$. Consequently, the optimal loss is exactly the expected projection distance:
\[
B_U(L) = \E_{x\sim\mu_U}\big[\,\dist(F_U(x,\cdot),\cV_L)^2\,\big].
\]
\end{proposition}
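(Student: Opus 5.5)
The plan is a conditional-expectation (bias--variance) decomposition followed by pointwise minimization. Write $\mu_U$ for the law of $X_U$, and $\nu_x$ for the conditional law of $\hat V$ given $X_U=x$. By \eqref{eq:FU_def}, $F_U(X_U,\hat V)=\E[I^*(Y,\hat V)\mid X_U,\hat V]$. Since the prediction $I_{U,g}(Y,\hat V)=g(X_U,\hat V)$ depends on the ray only through $(X_U,\hat V)$, I add and subtract $F_U(X_U,\hat V)$ inside $\cJ(U,g)$. The cross term vanishes by the tower property: condition on $(X_U,\hat V)$, and the factor $g(X_U,\hat V)-F_U(X_U,\hat V)$ is measurable with respect to that $\sigma$-algebra. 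This gives
\[
\cJ(U,g)=\E\big[|g(X_U,\hat V)-F_U(X_U,\hat V)|^2\big]+\E\big[|I^*(Y,\hat V)-F_U(X_U,\hat V)|^2\big].
\]
The second term does not depend on $g$. I would keep it explicitly as an irreducible noise floor $N_U$. The displayed formula for $B_U(L)$ is then exact only when $N_U=0$, i.e.\ when $(X_U,\hat V)$ determines the observed color, as in the single-origin or deterministic-ray regime. Otherwise the formula holds up to the additive constant $N_U$, and the statement should be read as an identity for the $g$-dependent part. I will say this explicitly rather than hide it.

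Next I disintegrate the first term as $\E_{x\sim\mu_U}\big[\|g(x,\cdot)-F_U(x,\cdot)\|^2_{L^2(\nu_x)}\big]$. For each fixed $x$, $\cV_L$ is finite-dimensional, hence a closed subspace of $L^2(\nu_x)$ (more precisely, of its image there). So the Hilbert projection theorem gives a minimizer $\Pi^{(x)}_{\cV_L}F_U(x,\cdot)$, with minimum value $\dist(F_U(x,\cdot),\cV_L)^2$ in the $\nu_x$-weighted norm. This requires $F_U(x,\cdot)\in L^2(\nu_x)$, which follows from $I^*$ being bounded, since colors lie in a bounded range, and Jensen's inequality. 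Because the constraint $g(x,\cdot)\in\cV_L$ is imposed separately at each $x$, the infimum of the integral is at least the integral of the pointwise infima. Choosing $g^*(x,\cdot)$ to be the pointwise projection attains that lower bound, so it is a global minimizer and $B_U(L)$ equals the stated expectation.

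The main technical obstacle is measurability of $x\mapsto g^*(x,\cdot)$, needed so that $g^*$ is an admissible appearance function and the expectation makes sense. I would handle it concretely. Fix the basis $\{Y_{\ell m}\}$ and write the projection coefficients as $G(x)^{+}b(x)$, where $G(x)_{ij}=\int Y_iY_j\,d\nu_x$ is the Gram matrix and $b(x)_i=\int F_U(x,\cdot)Y_i\,d\nu_x$. Both are measurable in $x$ by standard properties of regular conditional distributions, and the Moore--Penrose pseudoinverse is a Borel function of the matrix entries. The pseudoinverse also covers degenerate $\nu_x$, for instance a point where only a few directions are observed, in which case the projection is unique only $\nu_x$-a.e. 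That non-uniqueness does not change the loss value. Finally, minimizers are unique only up to $\mu_U$-null sets and $\nu_x$-null directions, so I would phrase ``the'' minimizer in that a.e.\ sense.
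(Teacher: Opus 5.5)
Your proposal is correct and follows essentially the same route as the paper's proof: a tower-property decomposition that isolates the $g$-independent term $\E[\Var(I^*\mid X_U,\hat V)]$, disintegration into $\mu_U$ and the conditional laws $\nu_U(\cdot\mid x)$, and a pointwise Hilbert-space projection onto the finite-dimensional $\cV_L$. Your explicit noise floor $N_U$ matches the paper's own caveat, since the paper writes the minimum as the expected squared distance plus $\mathrm{const}(U)$ and absorbs that constant into $B_U(L)$. Your Gram-matrix/pseudoinverse measurability argument is a slightly more careful version of the paper's normal equations $G(x)a(x)=b(x)$, and it also covers the degenerate-$\nu_x$ case that the paper glosses over.
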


Because the appearance parameters at point $x$ only affect rays hitting $x$, the global integral cleanly decouples into independent least-squares approximations (a rigorous measure-theoretic proof via disintegration is provided in the supplementary material). This principle is crucial: it transforms the geometric identifiability problem into a pure signal-processing question. We now only need to quantify how well the induced target $F_U(x,\cdot)$ can be approximated by the available angular space $V_L$. That is, we measure the weighted projection error $\mathrm{dist}(F_U(x,\cdot), V_L)$, which precisely corresponds to the residual energy of $F_U$ in the high-frequency SH degrees $\ell > L$.

\section{The True Surface: Why Smooth Residuals Yield Rapid Decay}\label{sec:true}
% ============================================================

To evaluate the best-achievable loss $A(L)$ on the correct geometry, we must formalize the physical appearance of the true scene. We adopt a decomposition where the ground-truth appearance $C(x,\hat v)$ consists of a view-independent diffuse texture $T(x)$ and a view-dependent specular residual $R(x,\hat v)$.

\begin{assumption}[Diffuse + Sobolev-smooth residual]\label{ass:residual}
The ground-truth appearance on $S$ decomposes as $C=T+R$. Moreover, there exist $s_{\mathrm{true}}>0$ and $M_{\mathrm{true}}<\infty$ such that for almost every $x\in S$, the view-dependent residual is mathematically smooth:
\[
\norm{R(x,\cdot)}_{H^{s_{\mathrm{true}}}(\Sph)}\le M_{\mathrm{true}}.
\]
\end{assumption}

This assumption mathematically enforces that while view-dependent effects like specular highlights exist, they are not arbitrarily sharp or infinitely spiky in direction. The Sobolev norm $H^{s}(\Sph)$ controls the angular high-frequency energy, meaning a larger $s$ ensures a stronger decay of SH coefficients. This realistically models most non-mirror materials and is precisely what makes the true surface straightforward to fit.

When the model geometry is perfectly correct, the through-seen target aligns flawlessly with the physical appearance: $F_S(x,\hat v)=T(x)+R(x,\hat v)$. Because the diffuse term $T(x)$ does not change with the viewing direction $\hat v$, it resides entirely within the SH degree-0 subspace ($\cV_0 \subset \cV_L$). 

Consequently, the spatial complexity of the true texture---no matter how high-frequency or intricate---consumes absolutely zero angular capacity. The approximation difficulty stems exclusively from the residual $R$:
\[
\dist\big(F_S(x,\cdot),\cV_L\big)
=\dist\big(R(x,\cdot),\cV_L\big).
\]
Because $R$ is assumed to be Sobolev-smooth, its projection error onto a bandlimited SH subspace naturally decays. We formalize this using standard spectral approximation on the sphere (full derivation provided in the supplementary material):

\begin{lemma}[Sobolev smoothness implies fast SH truncation]\label{lem:sobolev-trunc}
Let $P_{\le L}$ be the $L^2(\Sph)$ orthogonal projection onto $\cV_L$. If $f\in H^s(\Sph)$, then its truncation error is bounded by:
\[
\norm{f-P_{\le L}f}_{L^2(\Sph)}^2
\ \le\
\big(1+L(L+1)\big)^{-s}\ \norm{f}_{H^s(\Sph)}^2.
\]
\end{lemma}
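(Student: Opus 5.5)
The plan is to work entirely in the spherical-harmonic basis and use the spectral definition of the Sobolev norm on $\Sph$. Expand $f=\sum_{\ell\ge 0}\sum_{m=-\ell}^{\ell} \hat f_{\ell m} Y_{\ell m}$ in $L^2(\Sph)$. The real harmonics form an orthonormal basis, and each $Y_{\ell m}$ is an eigenfunction of the Laplace--Beltrami operator with $-\Delta Y_{\ell m}=\ell(\ell+1)Y_{\ell m}$. I will therefore take the (standard) definition
\[
\norm{f}_{H^s(\Sph)}^2=\sum_{\ell\ge0}\big(1+\ell(\ell+1)\big)^{s}\sum_{m=-\ell}^{\ell}|\hat f_{\ell m}|^2,
\]
that is, $\norm{f}_{H^s}=\norm{(1-\Delta)^{s/2}f}_{L^2}$. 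If the paper intends a different but equivalent norm, the inequality holds with a constant. With the normalization above, it holds exactly as stated.

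The first step is to identify the truncation. Because $\cV_L$ is spanned by an orthonormal subfamily of the basis, the orthogonal projection $P_{\le L}$ simply keeps the coefficients with $\ell\le L$. Parseval then gives
\[
\norm{f-P_{\le L}f}_{L^2}^2=\sum_{\ell>L}\sum_{m}|\hat f_{\ell m}|^2 .
\]

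The second step is a weight comparison. For $\ell\ge L+1$ we have $1+\ell(\ell+1)\ge 1+L(L+1)$, and $s>0$, so
\[
1\le \big(1+\ell(\ell+1)\big)^{s}\big(1+L(L+1)\big)^{-s}.
\]
Multiply each tail term by this factor, then enlarge the sum to all $\ell\ge0$ using nonnegativity. This yields
\[
\norm{f-P_{\le L}f}_{L^2}^2\le (1+L(L+1))^{-s}\norm{f}_{H^s}^2.
\]
(The bound even holds with $L+1$ in place of $L$.)

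There is no real analytic obstacle here. The only delicate point is fixing the norm convention so that the constant comes out exactly as stated. For non-integer $s$ the spectral definition is the natural one, and it agrees with the Bessel-potential definition. I would state this convention explicitly at the start of the proof.
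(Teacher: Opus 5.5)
Your proof is correct and matches the paper's own argument in the supplementary material essentially step for step. The paper likewise adopts the spectral Sobolev norm with weights $(1+\ell(\ell+1))^s$, identifies the truncation error with the discarded coefficient tail via orthonormality, and uses $(1+\ell(\ell+1))^s\ge(1+L(L+1))^s$ for $\ell>L$ before summing. Your side remark that the factor can be sharpened to $(1+(L+1)(L+2))^{-s}$ is also correct.
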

(For simplicity, we state the truncation bound under the uniform sphere measure; extending this to pointwise weighted norms requires mild density bounds on $\nu_U(\cdot \mid x)$, as detailed in the supplementary material).

This rate comes from the Laplace--Beltrami eigenvalues $\ell(\ell+1)$, which define the Sobolev weights $(1+\ell(\ell+1))^s$. Truncating the expansion at the maximum degree $L$ then yields the specific factor $(1+L(L+1))^{-s}$ observed in the error bound. Bounding the Sobolev norm directly suppresses the energy in the tail $\ell>L$. By applying this lemma to our residual, we immediately establish the upper bound for the true surface.

\begin{proposition}[True-surface upper bound]\label{prop:true-upper}
Under Assumption~\ref{ass:residual}, the best achievable loss on the true surface satisfies:
\[
A(L)\ \le\ \big(1+L(L+1)\big)^{-s_{\mathrm{true}}}\,M_{\mathrm{true}}^2,
\]
up to a multiplicative constant depending only on view-direction sampling.
\end{proposition}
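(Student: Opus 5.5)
The plan is to combine Proposition~\ref{prop:pointwise-proj} with Lemma~\ref{lem:sobolev-trunc}, applied pointwise on the true surface, and then average over hit points. First, specialize Proposition~\ref{prop:pointwise-proj} to $U=S$. This gives
\[
A(L)=\E_{x\sim\mu_S}\big[\dist_{\nu_S(\cdot\mid x)}(F_S(x,\cdot),\cV_L)^2\big],
\]
where the distance is taken in $L^2$ weighted by the conditional viewing distribution $\nu_S(\cdot\mid x)$. On the true surface the hit point determines the observed color, so $F_S(x,\hat v)=C(x,\hat v)=T(x)+R(x,\hat v)$. Since $T(x)$ is constant in $\hat v$, it lies in $\cV_0\subset\cV_L$. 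Translating by an element of a subspace does not change the distance to it, so
\[
\dist(F_S(x,\cdot),\cV_L)=\dist(R(x,\cdot),\cV_L)
\]
for almost every $x$.

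Next, bound the weighted distance by a uniform one. The weighted projection is the best weighted approximant, so
\[
\dist_{\nu}(R(x,\cdot),\cV_L)^2\le \norm{R(x,\cdot)-P_{\le L}R(x,\cdot)}_{L^2(\nu_S(\cdot\mid x))}^2 .
\]
Here $P_{\le L}$ is the uniform-measure projection; it is a feasible but possibly suboptimal element of $\cV_L$. To compare with the uniform $L^2(\Sph)$ norm, I would assume a density bound $d\nu_S(\cdot\mid x)/d\sigma\le \kappa$ for almost every $x$. This is the "mild density bound" the paper alludes to. It yields
\[
\norm{\cdot}_{L^2(\nu)}^2\le\kappa\norm{\cdot}_{L^2(\Sph)}^2 .
\]
Lemma~\ref{lem:sobolev-trunc} then gives
\[
\norm{R(x,\cdot)-P_{\le L}R(x,\cdot)}_{L^2(\Sph)}^2\le(1+L(L+1))^{-s_{\mathrm{true}}}\norm{R(x,\cdot)}_{H^{s_{\mathrm{true}}}}^2 ,
\]
and Assumption~\ref{ass:residual} bounds the Sobolev norm by $M_{\mathrm{true}}^2$ uniformly in $x$.

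Finally, take the expectation over $x\sim\mu_S$. The bound is uniform, so it passes through the expectation, giving $A(L)\le\kappa(1+L(L+1))^{-s_{\mathrm{true}}}M_{\mathrm{true}}^2$. The constant $\kappa$ depends only on the view-direction sampling, which is the claimed form.

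The main obstacle is the step relating the weighted and uniform norms, along with measurability of $x\mapsto R(x,\cdot)$ so that the expectation is well defined. The statement's phrase "up to a multiplicative constant" signals that a bounded Radon--Nikodym derivative of the conditional view distribution is exactly the hypothesis needed. If the conditional view distribution is supported on only part of the sphere, this route still works, since we only need an upper density bound and no lower one. This is a useful point: no extension argument is required. The pointwise-projection reduction itself is taken as given from Proposition~\ref{prop:pointwise-proj}.
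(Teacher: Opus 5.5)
Your proposal is correct and follows the paper's own argument. You specialize the pointwise projection to $U=S$, absorb $T(x)$ into $\cV_0$, apply the Sobolev truncation lemma to $R(x,\cdot)$, and average over $\mu_S$, with the view-sampling constant coming from a density bound on $\nu_S(\cdot\mid x)$; your explicit use of $P_{\le L}R(x,\cdot)$ as a feasible competitor also makes clear that only the upper bound $w_{\max}$ is needed, not the two-sided norm equivalence the supplementary remark assumes.
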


The proof follows directly from our pointwise projection principle (\Cref{prop:pointwise-proj}). Since $T(x)$ projects with zero error, the squared distance
\[
\dist(F_S(x,\cdot),\cV_L)^2
\]
is simply the truncation error of $R(x,\cdot)$. Applying \Cref{lem:sobolev-trunc} and averaging over the surface points yields the bound.

This result establishes a critical baseline: $A(L)$ drops rapidly with $L$. A moderate SH degree is more than sufficient to make the true surface an attractive minimum for the optimizer. If this were the whole story, one might assume that simply increasing $L$ is harmless. However, as we demonstrate next, a misaligned geometry interacts with diffuse texture in a fundamentally different way, forcefully converting spatial details into high-frequency angular signals.

% ============================================================
\section{The Misaligned Surface: Parallax Forges High Angular Frequencies}\label{sec:fake}
% ============================================================

Having established that the true surface inherently requires very little angular capacity, we now turn to what happens when the model proposes a geometrically misaligned candidate. We aim to show that wrong geometry is penalized heavily because it forces spatial complexity to masquerade as angular complexity.

\subsection{Formalizing the Billboard Scenario}

To analyze the ``opaque billboard'' failure mode, where incorrect geometry achieves optimal appearance, we establish three minimal conditions.

\begin{assumption}[Visible texture mode]\label{ass:texture}
There exists a visible surface patch and a local tangential coordinate $u$ such that the true diffuse texture $T(u)$ contains a sinusoidal component $a\sin(ku)$ with amplitude $a>0$ and spatial frequency $k>0$.
\end{assumption}

\begin{assumption}[Visible misalignment magnitude]\label{ass:misalign}
On a visible set of rays, the candidate surface $U$ is misaligned from the true surface $S$ by a non-negligible depth gap $\Delta z > 0$.
\end{assumption}

\begin{assumption}[Residual cannot erase texture]\label{ass:residual_ampl}
The true specular residual $R$ is not strong enough to adversarially cancel the spatial texture: there exists $0\le \rho<1$ such that $|R(x,\hat v)|\le \rho a$.
\end{assumption}

These assumptions collectively define the basic requirements for geometry to be visually identifiable from RGB alone. If the scene lacks texture entirely (Assumption~\ref{ass:texture} fails) or the residual completely washes out the diffuse color (Assumption~\ref{ass:residual_ampl} fails), the optimization lacks the necessary signal to distinguish depth. If the depth gap is negligible (Assumption~\ref{ass:misalign} fails), the candidate surface effectively behaves like the true surface.

\paragraph{Material-model limitation.}
Assumption~\ref{ass:residual_ampl} excludes view-dependent effects that can cancel or dominate diffuse texture, including mirrors, inter-reflections, transparency, exposure/white-balance changes, and nonlinear tone mapping. Then the residual may itself carry high angular frequency, breaking our spatial-texture/angular-residual separation.

\subsection{The Frequency Transport Law via Parallax}

To see exactly how angular complexity is generated by geometric error, consider a standard 2D cross-section for parallax analysis. Place the camera at the origin, let the true surface be the line $z=z_{\mathrm{gt}}$, and let the candidate surface be a billboard at $z=z_{\mathrm{fake}}$, yielding a depth gap $\Delta z:=z_{\mathrm{gt}}-z_{\mathrm{fake}}>0$. We parameterize the camera rays by a sweeping angle $\alpha$. Equivalently, assume a moving camera observing a fixed physical point $x_0$ on the fake surface from different view angles $\alpha$. See Fig.~\ref{fig:parallax_panels} for a visual illustration of both the geometric transport and the induced view-to-color mapping. 

\begin{figure}[ht!]
    \centering
    \begin{subfigure}[t]{0.49\linewidth}
        \centering
        \includegraphics[width=\linewidth]{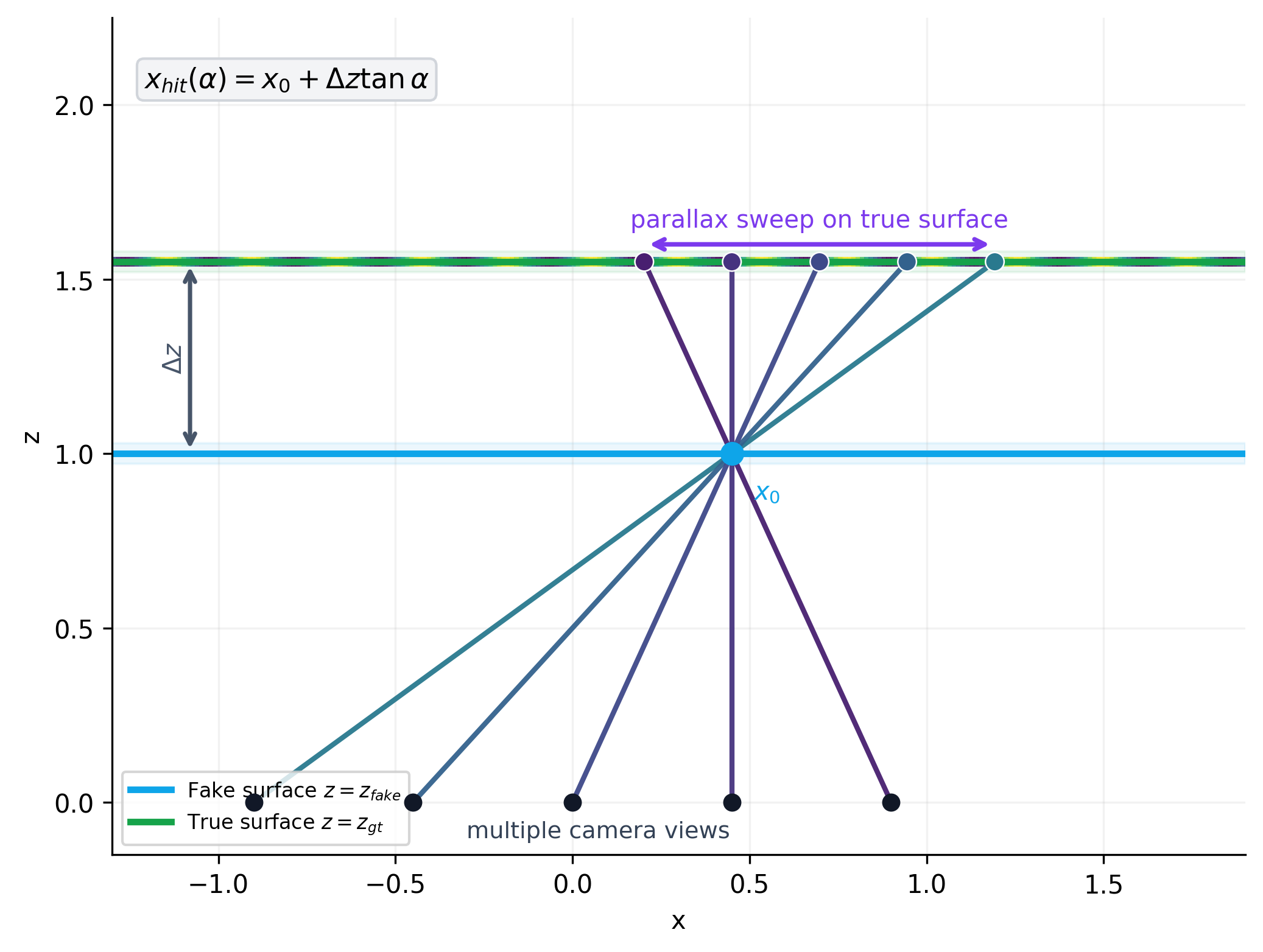}
        \caption{Parallax geometry under a misaligned surface. Rays from different camera centers all intersect the same fake point $x_0$ on $z=z_{\text{fake}}$, but correspond to different true hit points on $z=z_{\text{gt}}$. The transported hit location follows $x_{\text{hit}}(\alpha)=x_0+\Delta z\tan\alpha$, producing a sweep over the true surface.}
    \end{subfigure}\hfill
    \begin{subfigure}[t]{0.49\linewidth}
        \centering
        \includegraphics[width=\linewidth]{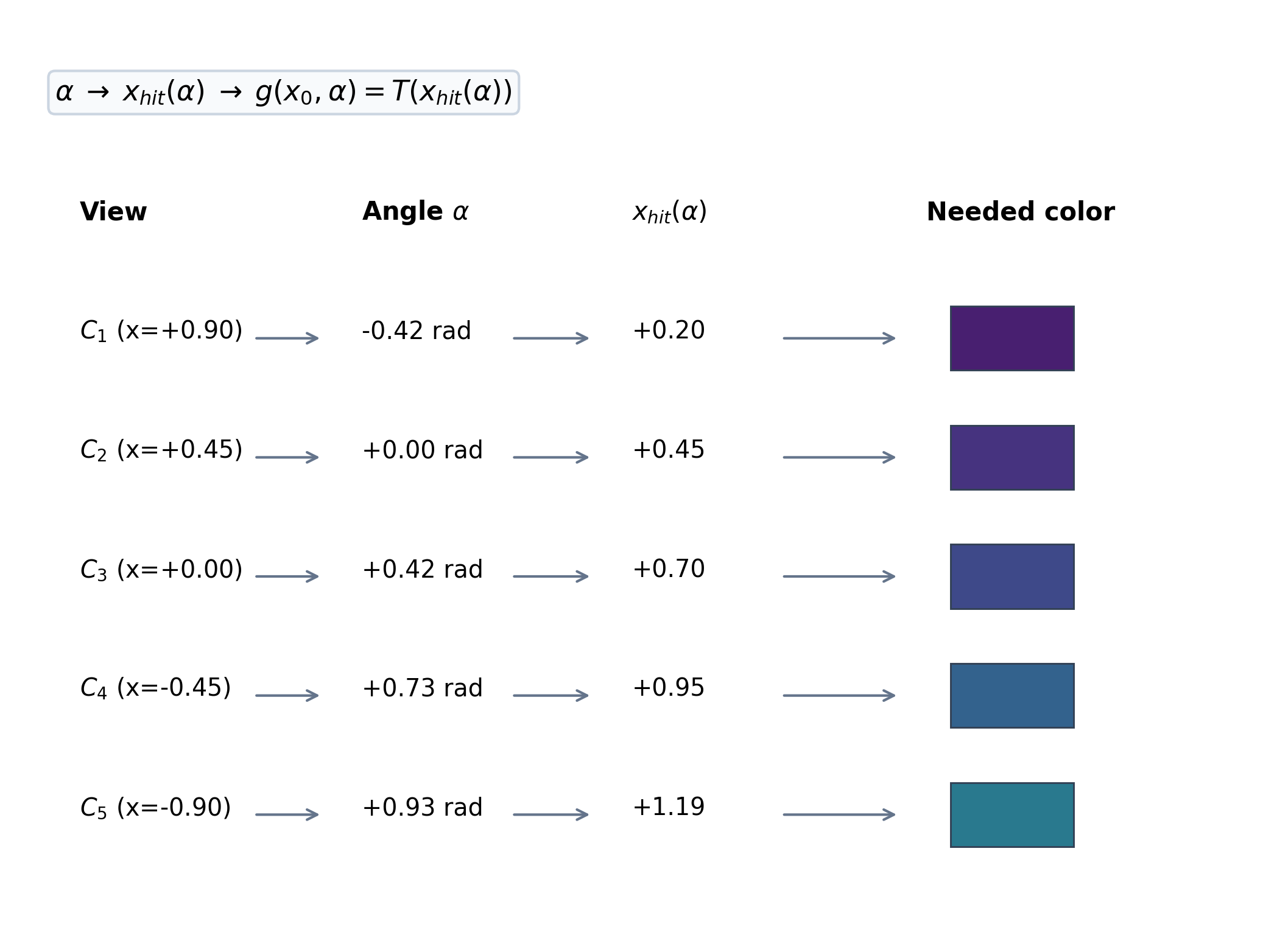}
        \caption{View-to-color mapping at a fixed fake point. For each camera/view angle $\alpha$, the required color at $x_0$ is the true texture evaluated at the transported point, i.e., $g(x_0,\alpha)=T(x_{\text{hit}}(\alpha))$. Thus, wrong geometry forces one 3D point to encode different colors across views.}
    \end{subfigure}
    \caption{Two-panel visualization of parallax-induced frequency transport under misaligned geometry.}
    \label{fig:parallax_panels}
\end{figure}

A direct geometric computation (supplementary material) reveals that if a ray first hits the wrong surface at lateral coordinate $x_0$, the physical hit point on the true background surface moves according to:
\begin{equation}
x_{\mathrm{hit}}(\alpha)\ =\ x_0+\Delta z\,\tan\alpha.
\label{eq:parallax_key}
\end{equation}

Consequently, when the model evaluates the loss on the wrong surface, its through-seen target function $F_U$ must pull the color from the background texture $T(x_{\mathrm{hit}}(\alpha))$(the residual term $R$ is handled via Assumption~\ref{ass:residual_ampl} and the supplementary material). Using our assumed texture component and approximating $\tan\alpha\approx\alpha$ for moderate viewing angles, the target signal on the wrong surface becomes:
\[
a\sin\!\big(k(x_0+\Delta z\alpha)\big)\ =\ a\sin\!\big(kx_0 + (k\Delta z)\alpha\big).
\]
This reveals a profound physical transformation: the \emph{spatial} frequency $k$ has been forcefully multiplied by the depth error $\Delta z$ to generate a new \emph{angular} frequency:
\begin{equation}
\omega_{\mathrm{fake}}\ \approx\ k\Delta z.
\label{eq:omega_fake}
\end{equation}
This scale law dictates the appearance burden placed on wrong geometry: finer textures or larger depth errors demand increasingly rapid angular color shifts.

\subsection{The Immutable Lower Bound for Wrong Geometry}

This induced angular frequency creates a mathematical bottleneck. As established earlier, any SH basis of degree $L$ acts as a low-pass filter along viewing arcs.

\begin{lemma}[Bandlimit along great circles]\label{lem:greatcircle-bandlimit}
If $h\in \cV_L$, restricting $h$ to any great circle yields a 1D trigonometric polynomial with maximum frequency $L$.
\end{lemma}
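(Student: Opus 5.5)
The plan is to reduce the statement to a fact about homogeneous polynomials. Every real spherical harmonic $Y_{\ell m}$ with $\ell\le L$ is the restriction to $\Sph$ of a harmonic homogeneous polynomial of degree $\ell$ in the ambient coordinates $(x_1,x_2,x_3)$. By linearity, any $h\in\cV_L$ is therefore the restriction to $\Sph$ of a polynomial $p(x)$ of total degree at most $L$. This is immediate from the standard identification of $\cV_L$ with restrictions of polynomials of degree $\le L$, which I will take as known.

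Next I will parameterize an arbitrary great circle. Choose an orthonormal pair $e_1,e_2\in\R^3$ spanning the plane of the circle. The circle is then $\gamma(\theta)=\cos\theta\,e_1+\sin\theta\,e_2$ for $\theta\in[0,2\pi)$. Substituting gives
\[
h(\gamma(\theta))=p(\cos\theta\,e_1+\sin\theta\,e_2)=\sum_{a+b\le L} c_{ab}\cos^a\theta\,\sin^b\theta .
\]
This holds because each coordinate of $\gamma(\theta)$ is a linear combination of $\cos\theta$ and $\sin\theta$, so each monomial of degree $d\le L$ in $x$ becomes a sum of products $\cos^a\theta\sin^b\theta$ with $a+b=d$.

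The key step is to note that $\cos^a\theta\sin^b\theta$ is a trigonometric polynomial of degree at most $a+b$. To see this, write it with Euler's formula as
\[
\left(\frac{e^{i\theta}+e^{-i\theta}}{2}\right)^{a}\left(\frac{e^{i\theta}-e^{-i\theta}}{2i}\right)^{b}.
\]
Expanding, every exponential that appears has the form $e^{ij\theta}$ with $|j|\le a+b\le L$. Collecting terms gives $h(\gamma(\theta))=\sum_{|j|\le L}\beta_j e^{ij\theta}$. Because $h$ is real-valued, $\beta_{-j}=\overline{\beta_j}$, so this equals a real trigonometric polynomial $\sum_{j=0}^{L}\big(\alpha_j\cos j\theta+\alpha'_j\sin j\theta\big)$. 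This is the claimed bandlimit: the maximum frequency is at most $L$.

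The main obstacle is the first step, the polynomial representation of $\cV_L$; the rest is bookkeeping. If I want to avoid citing it, I would instead use rotation invariance. Each degree-$\ell$ eigenspace of the Laplace--Beltrami operator is invariant under $SO(3)$, so after a rotation I may assume the great circle is the equator, where $\theta$ is the azimuth. In the standard real basis, $Y_{\ell m}$ restricted to the equator equals $P_\ell^{|m|}(0)$ times $\cos(m\phi)$ or $\sin(|m|\phi)$. Since $|m|\le\ell\le L$, the same conclusion follows directly.
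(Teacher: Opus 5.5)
Your proof is correct, and your fallback paragraph is essentially the paper's own argument. The paper uses rotation invariance of $\cV_L$ to move the great circle to the equator, then reads off $Y_{\ell m}(\pi/2,\varphi)=N_{\ell m}P_\ell^m(0)e^{im\varphi}$ with $|m|\le\ell\le L$.

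Your primary route is genuinely different. You write $h$ as the restriction of an ambient polynomial of degree at most $L$ and substitute the unit-speed parameterization $\cos\theta\,e_1+\sin\theta\,e_2$. This reduces the lemma to the elementary fact that $\cos^a\theta\sin^b\theta$ has trigonometric degree at most $a+b$. The argument is coordinate-free. It needs neither the rotation invariance of $\cV_L$, which the paper takes as standard, nor any facts about associated Legendre functions. It also generalizes immediately: every circle on $\Sph$, great or small, has ambient coordinates affine in $(\cos\theta,\sin\theta)$, so the same degree-$L$ bound holds along all of them (in the circle's angle parameter). The cost is that you import the polynomial characterization of spherical harmonics as a black box.

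The paper's explicit equatorial formula has one advantage: it is reused for the converse in the appendix (Lemma~\ref{lem:circle_surject}, via $P_{|m|}^{|m|}(0)\neq 0$). Your framework handles that converse just as easily, though. The polynomials $\mathrm{Re}(x_1+ix_2)^m$ and $\mathrm{Im}(x_1+ix_2)^m$ are harmonic and homogeneous of degree $m$, and they restrict to $\cos m\theta$ and $\sin m\theta$ on the equator.
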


When the induced frequency exceeds the model's capacity ($L<\omega_{\mathrm{fake}}$), the model is mathematically incapable of fitting the signal. For analytical clarity, we use a global Fourier bound rather than relying only on local linearization. This is governed by a fundamental property of Fourier approximation (proven in the supplementary material):

\begin{lemma}[High frequency cannot be fit by low frequency]\label{lem:fourier-lower}
Suppose the frequency $\omega$ is an integer. Consider $f(\phi)=a\sin(\omega\phi)$ on $[0,2\pi]$. The best $L^2$ approximation of $f$ by any trigonometric polynomial of degree $L < \omega$ leaves an unavoidable residual error strictly bounded away from zero.
\end{lemma}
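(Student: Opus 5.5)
The plan is to use orthogonality of the Fourier basis on $[0,2\pi]$. With respect to the inner product $\langle f,h\rangle=\int_0^{2\pi} f\,h\,d\phi$, the functions
\[
1,\ \cos(j\phi),\ \sin(j\phi),\qquad j\ge 1,
\]
are mutually orthogonal. Let $\mathcal T_L$ denote the space of trigonometric polynomials of degree at most $L$, spanned by $1$ and $\cos(j\phi),\sin(j\phi)$ for $1\le j\le L$. We treat $f(\phi)=a\sin(\omega\phi)$ with $\omega$ a positive integer; by symmetry the case $\omega<0$ is identical. Since $\omega>L$, the function $\sin(\omega\phi)$ is orthogonal to every basis element of $\mathcal T_L$, because $\int_0^{2\pi}\sin(\omega\phi)\cos(j\phi)\,d\phi=0$ and $\int_0^{2\pi}\sin(\omega\phi)\sin(j\phi)\,d\phi=0$ for all $j\ne\omega$. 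Hence $f\perp\mathcal T_L$.

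The orthogonal projection of $f$ onto $\mathcal T_L$ is therefore zero. For any $p\in\mathcal T_L$, the Pythagorean identity gives
\[
\norm{f-p}_{L^2}^2=\norm{f}_{L^2}^2+\norm{p}_{L^2}^2\ \ge\ \norm{f}_{L^2}^2=a^2\int_0^{2\pi}\sin^2(\omega\phi)\,d\phi=\pi a^2.
\]
The best approximation is thus $p\equiv0$, and the residual equals exactly $\pi a^2$. Normalized by the interval length this is $a^2/2$, which is strictly positive because $a>0$. This is the claimed bound away from zero, and it is uniform in $L<\omega$.

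Two remarks connect this to how the lemma is used. First, the integrality assumption on $\omega$ is the only delicate point. For non-integer $\omega$ (the generic case $\omega_{\mathrm{fake}}\approx k\Delta z$), $\sin(\omega\phi)$ is not periodic on $[0,2\pi]$ and its Fourier coefficients at low frequencies need not vanish. The statement restricts to integer $\omega$ precisely to avoid this, and the plan stays within that hypothesis. Second, the lemma transfers to the SH setting through \Cref{lem:greatcircle-bandlimit}: restricted to a great circle, any $h\in\cV_L$ lies in $\mathcal T_L$, so the same $a^2/2$ floor applies along viewing arcs. An additive residual bounded by $\rho a$, as in Assumption~\ref{ass:residual_ampl}, reduces the floor to at least $(1-\rho)^2a^2/2$ by the triangle inequality. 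This reduction belongs to the later application, not to the present proof.
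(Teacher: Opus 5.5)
Your proof is correct and is essentially the paper's. The paper expands $\sin(\omega\varphi)$ in the complex modes $e^{\pm i\omega\varphi}$ rather than your real basis, but the argument is the same: $f\perp\mathcal T_L$ for $\omega>L$, so the best approximation is $0$ and the residual is $\|f\|^2=\pi a^2$.

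One caveat concerns your closing remark, which lies outside this lemma. A pointwise bound $|R|\le\rho a$ only gives $\|R\|\le\rho a$ in the normalized $L^2$ norm, so the triangle inequality yields the floor $a^2(1/\sqrt2-\rho)^2$, which is vacuous for $\rho\ge1/\sqrt2$, not $(1-\rho)^2a^2/2$. The latter in fact fails: $R=-\rho a\,\mathrm{sgn}(\sin\omega\phi)$ leaves residual $a^2(\tfrac12-\tfrac{4\rho}{\pi}+\rho^2)<(1-\rho)^2a^2/2$ whenever $0<\rho<2(4/\pi-1)$.
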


Because the specular residual cannot completely erase the texture (Assumption~\ref{ass:residual_ampl}), we can invoke the triangle inequality on the function spaces to guarantee that the projection error of the combined signal remains large. This culminates in our strict lower bound for misaligned geometry.

Here $a\asymp b$ denotes equality up to constants independent of $L$.

\begin{proposition}[Misaligned-surface lower bound]\label{prop:fake-lower}
Under Assumption~\ref{ass:texture}, Assumption~\ref{ass:misalign}, and Assumption~\ref{ass:residual_ampl}, there exist $\omega_{\mathrm{fake}}\asymp k\Delta z$ and a constant $c_0>0$ such that for all angular capacities $L<\omega_{\mathrm{fake}}$,
\[
B_U(L)\ \ge\ c_0.
\]
\end{proposition}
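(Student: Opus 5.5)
The bound follows by combining \Cref{prop:pointwise-proj} with the parallax transport law~\eqref{eq:parallax_key} and the Fourier obstruction in \Cref{lem:fourier-lower}.

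\textbf{Step 1: localize the loss.} By \Cref{prop:pointwise-proj},
\[
B_U(L)=\E_{x\sim\mu_U}\big[\dist(F_U(x,\cdot),\cV_L)^2\big].
\]
Since every term is nonnegative, we may discard all hit points except those in the visible misaligned set of Assumption~\ref{ass:misalign}, whose rays pass through the textured patch of Assumption~\ref{ass:texture}. Call this set $G\subset U$. We assume $\mu_U(G)\ge m_0>0$; this is implicit in the word ``visible''. For each $x_0\in G$, restrict the view direction to the great-circle arc through the cross-section plane, parametrized by $\alpha$. Using~\eqref{eq:parallax_key}, the target along this arc is
\[
F_U(x_0,\alpha)=T\big(x_0+\Delta z\tan\alpha\big)+\tilde R(x_0,\alpha),
\]
where $\tilde R$ is the conditional average of $R$, so $|\tilde R|\le\rho a$ by Assumption~\ref{ass:residual_ampl}.

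\textbf{Step 2: pass from the sphere to one dimension.} Any $h\in\cV_L$ restricts on this arc to a trigonometric polynomial of degree at most $L$, by \Cref{lem:greatcircle-bandlimit}. The weighted $L^2$ distance on $\Sph$ is bounded below by the arc-integrated distance. This uses a lower density bound on the viewing distribution $\nu_U(\cdot\mid x_0)$ in a band around the arc, of the same kind as the mild density bounds already invoked for \Cref{lem:sobolev-trunc}, together with Fubini over nearby parallel arcs, each of which has the same structure. So it suffices to bound from below
\[
\inf_{p\in\mathcal T_L}\big\|F_U(x_0,\cdot)-p\big\|_{L^2(\text{arc})},
\]
where $\mathcal T_L$ denotes the trigonometric polynomials of degree at most $L$.

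\textbf{Step 3: isolate the texture mode.} Write $T=a\sin(ku)+T_{\perp}$. Changing variables to $\phi=\Delta z\tan\alpha$, or using $\tan\alpha\approx\alpha$ over the viewing range as the paper does, turns the texture mode into $a\sin(kx_0+\omega_{\mathrm{fake}}\phi)$ with $\omega_{\mathrm{fake}}\asymp k\Delta z$. We round this to an integer frequency and rescale the arc to $[0,2\pi]$ so that \Cref{lem:fourier-lower} applies. The phase $kx_0$ does no harm, because the lemma's argument applies equally to $\cos$. The lemma then gives
\[
\dist\big(a\sin(\omega_{\mathrm{fake}}\cdot+kx_0),\mathcal T_L\big)\ge c_1 a
\qquad\text{for } L<\omega_{\mathrm{fake}}.
\]
Two perturbations must be controlled. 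First, $T_\perp$: I would handle it by orthogonality, taking the mode to be an orthogonal Fourier component of $T$ on the patch so that its contribution cannot cancel the mode. Second, $\tilde R$: by the reverse triangle inequality,
\[
\dist(F,\mathcal T_L)\ge c_1a-\|\tilde R\|_{L^2}\ge (c_1-\rho\,c_2)\,a,
\]
where $c_2$ is the arc-length factor.

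\textbf{Main obstacle.} This last step is the one I expect to be hardest. The inequality yields a positive bound only if $\rho<c_1/c_2$, whereas Assumption~\ref{ass:residual_ampl} only gives $\rho<1$. Taking the arc to be the full circle, $c_1/c_2$ can be made to equal $1/\sqrt2$ relative to the sup-norm. I would therefore first try to sharpen the comparison by measuring the residual in sup norm against the mode's sup amplitude, which requires projecting the residual only onto the mode direction. Failing that, I would absorb the loss into $c_0$ by reinterpreting $\rho$ in the normalized sense.

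\textbf{Step 4: integrate.} Squaring the pointwise bound and integrating over $G$ gives
\[
B_U(L)\ge m_0\,(c_1-\rho c_2)^2a^2=:c_0>0,
\]
uniformly in $L<\omega_{\mathrm{fake}}$.
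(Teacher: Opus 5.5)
Your route is the paper's own. It goes through Proposition~\ref{prop:pointwise-proj}, restriction to a viewing great circle, Lemma~\ref{lem:greatcircle-bandlimit}, Lemma~\ref{lem:fourier-lower}, the reverse triangle inequality \eqref{eq:dist_triangle}, and integration over a visible patch of positive mass.

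The step you flag is a genuine gap. The paper does not close it either: it applies \eqref{eq:dist_triangle} and simply asserts that the residual ``cannot completely cancel'' the mode. Your count is right: on the full circle the $L^2$ triangle inequality gives only $\dist\ge a\sqrt{\pi}\,(1-\sqrt{2}\,\rho)$. Your sharpening goes as far as anything can under the stated hypotheses. Let $e:=\sin(\omega_{\mathrm{fake}}\varphi+kx_0)/\sqrt{\pi}$, which is orthogonal to $\mathcal{T}_L$ and to $T_\perp$. Since $\int_0^{2\pi}|\sin(\omega_{\mathrm{fake}}\varphi+kx_0)|\,d\varphi=4$, you get $\dist(F,\mathcal{T}_L)\ge|\langle F,e\rangle|\ge a\sqrt{\pi}\,(1-4\rho/\pi)$. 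This proves the claim for $\rho<\pi/4$, and that threshold is sharp for the hypotheses as literally written. To see this, take $T(u)=\tfrac{\pi a}{4}\,\mathrm{sgn}(\sin ku)$, whose fundamental is exactly $a\sin(ku)$; here $T_\perp$ consists of the odd harmonics $3k,5k,\dots$, orthogonal to the mode. Take $R:=-T$, which is view-independent and so satisfies Assumption~\ref{ass:residual}. Then $|R|=\pi a/4$, so Assumption~\ref{ass:residual_ampl} holds with $\rho=\pi/4<1$. Yet $C\equiv 0$, and $B_U(L)=0$ for every $L$. So ``reinterpreting $\rho$'' is a change of hypothesis, not a normalization. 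Handling $T_\perp$ by orthogonality and $\tilde R$ by a separate pointwise bound cannot reach $\rho<1$.

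The paper's appendix tacitly uses a stronger hypothesis: it lumps the other texture frequencies together with $R$ into one perturbation controlled by Assumption~\ref{ass:residual_ampl}. So assume the target on the viewing arc $[\alpha_-,\alpha_+]$ is $a\sin\psi(\alpha)+e(\alpha)$, with $\psi(\alpha)=kx_0+k\Delta z\tan\alpha$ and $|e-q|\le\rho a$ for some $q\in\mathcal{T}_L$. Then the full range $\rho<1$ follows, by zero counting rather than a triangle inequality.
\begin{itemize}
\item Since $\psi$ is increasing, $\sin\psi$ takes the values $\pm1$ alternately at $N\ge\lfloor k\Delta z(\tan\alpha_+-\tan\alpha_-)/\pi\rfloor$ points of the arc.
\item If some $p\in\mathcal{T}_L$ satisfied $|a\sin\psi-p|\le\rho a<a$ on the arc, $p$ would alternate in sign at those points. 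It would then have at least $N-1$ zeros on an arc shorter than a period. That is impossible for a nonzero trigonometric polynomial of degree $L$ once $N-1>2L$.
\item Set $\omega_{\mathrm{fake}}:=(N-1)/2\asymp k\Delta z$. The map $p\mapsto\|(|a\sin\psi-p|-\rho a)_+\|_{L^2(\mathrm{arc})}$ lower-bounds $\|F-p\|$. It is continuous and coercive on the finite-dimensional space $\mathcal{T}_L$, and by the zero count it cannot vanish, so its minimum is strictly positive.
\item Because the spaces $\mathcal{T}_L$ are nested, the bound is uniform over $L<\omega_{\mathrm{fake}}$.
\end{itemize}
This argument also repairs your Step~3. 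There, substituting $\phi=\Delta z\tan\alpha$, rounding $\omega_{\mathrm{fake}}$, or rescaling a partial arc to $[0,2\pi]$ does not carry $\mathcal{T}_L$ into $\mathcal{T}_L$, so Lemma~\ref{lem:fourier-lower} cannot be invoked that way. The paper avoids this only by idealizing a full circle of views, as in Appendix~\ref{app:largeL}.
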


The proof mechanically links our findings: the through-seen function on a misaligned patch contains an angular sinusoid of frequency $\omega_{\mathrm{fake}} \approx k\Delta z$. By \Cref{lem:greatcircle-bandlimit}, the $\cV_L$ appearance function cannot exceed frequency $L$ along this viewing circle. Thus, whenever $L<\omega_{\mathrm{fake}}$, \Cref{lem:fourier-lower} ensures a massive, unoptimizable constant error on this patch, which lower-bounds the global integral $B_U(L)$. We now possess the two essential halves of the identifiability story: the true surface benefits from a fast-decaying upper bound $A(L)$, while the wrong surface suffers from a stubborn lower bound $B_U(L)$. By synthesizing these bounds, we can explicitly characterize the intermediate capacity window that successfully recovers accurate 3D geometry.

% ============================================================
\section{Putting It Together: The Identifiability Window}\label{sec:window}
% ============================================================

We are finally positioned to synthesize the two halves of our analysis. By juxtaposing the rapid decay of the true surface's approximation error with the stubborn constant lower bound of the misaligned surface, the theoretical mechanism behind 3DGS geometric identifiability emerges clearly (see the supplementary material for a numeric example).

\begin{theorem}[Intermediate-$L$ identifiability window]\label{thm:window}
Under Assumption~\ref{ass:texture}, Assumption~\ref{ass:misalign}, and Assumption~\ref{ass:residual_ampl}, there exist constants $c_0>0$ and $\omega_{\mathrm{fake}}\asymp k\Delta z$ such that: (i) for all $L\ge 0$, $A(L) \le (1+L(L+1))^{-s_{\mathrm{true}}}M_{\mathrm{true}}^2$ (up to view-sampling constants); and (ii) for all $L<\omega_{\mathrm{fake}}$, $B_U(L) \ge c_0$.
Therefore, any $L$ satisfying $(1+L(L+1))^{-s_{\mathrm{true}}}M_{\mathrm{true}}^2 < c_0$ and $L<\omega_{\mathrm{fake}}$ guarantees a strict separation $A(L)<B_U(L)$: the true surface is strictly easier to fit than the misaligned surface $U$ under angular capacity $L$.
\end{theorem}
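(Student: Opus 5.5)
The theorem is essentially a synthesis of two results already established, so the plan is to assemble them and then handle the comparison.

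For part (i), I would invoke \Cref{prop:true-upper} directly. It rests on \Cref{prop:pointwise-proj} applied with $U=S$, so that $A(L)=\E_{x\sim\mu_S}[\dist(F_S(x,\cdot),\cV_L)^2]$. The target is $F_S=T+R$ with $T(x)\in\cV_0\subset\cV_L$. Hence the pointwise distance equals $\dist(R(x,\cdot),\cV_L)$. By \Cref{lem:sobolev-trunc} and Assumption~\ref{ass:residual}, this distance is at most $(1+L(L+1))^{-s_{\mathrm{true}}}M_{\mathrm{true}}^2$. Averaging over $x$ gives (i), with the view-sampling constant coming from comparing the weighted $L^2$ norm under $\nu_S(\cdot\mid x)$ to the uniform one. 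No new work is needed here beyond stating the density-ratio constant explicitly.

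For part (ii), I would invoke \Cref{prop:fake-lower}. To make the constant $c_0$ concrete, I would retrace its mechanism. Apply \Cref{prop:pointwise-proj} to write $B_U(L)$ as an average of pointwise projection distances, and restrict that average to the visible misaligned patch from Assumption~\ref{ass:misalign}, which has positive $\mu_U$-mass $m>0$. On this patch, use \eqref{eq:parallax_key} to express $F_U(x_0,\cdot)$ along a viewing great circle as $a\sin(kx_0+\omega_{\mathrm{fake}}\alpha)+(\text{other texture terms})+R$, with $\omega_{\mathrm{fake}}\asymp k\Delta z$.

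By \Cref{lem:greatcircle-bandlimit}, any $h\in\cV_L$ restricts to a trigonometric polynomial of degree at most $L$. \Cref{lem:fourier-lower} then gives a residual of order $a^2$ for the sinusoidal component. Assumption~\ref{ass:residual_ampl}, via the triangle inequality, leaves at least $((1-\rho)a)^2$ up to constants. Multiplying by $m$ and the density lower bound gives $c_0$. This constant depends on $a,\rho,m$ and the sampling density but not on $L$, which is exactly what is needed.

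The final step is immediate once both parts hold. For $L$ in the stated window, $A(L)\le(1+L(L+1))^{-s_{\mathrm{true}}}M_{\mathrm{true}}^2<c_0\le B_U(L)$. There is one subtlety: (i) holds only up to a view-sampling constant $K$. I would therefore either absorb $K$ into the window condition, requiring $K(1+L(L+1))^{-s_{\mathrm{true}}}M_{\mathrm{true}}^2<c_0$, or note that the theorem's condition is meant with that constant included.

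The main obstacle is (ii), specifically making \Cref{lem:fourier-lower} apply legitimately. The parallax phase is $k\Delta z\tan\alpha$ rather than linear in $\alpha$, $\omega_{\mathrm{fake}}$ need not be an integer, and the viewing arc covers only part of a great circle. My first attempt would be to choose the patch and view range so that the linearization error is small and the arc is an interval of length $\ge 2\pi/\omega_{\mathrm{fake}}$. I would then use the orthogonality, on that arc, of the sinusoid to low-degree polynomials up to a controlled leakage term. That leakage is exactly what forces the statement to use $\omega_{\mathrm{fake}}\asymp k\Delta z$ rather than equality.
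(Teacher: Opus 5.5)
Your top-level argument is the paper's: the theorem is just Proposition~\ref{prop:true-upper} and Proposition~\ref{prop:fake-lower} chained together. Your remark that the view-sampling constant must be carried into the window condition is a correct repair that the paper omits. The genuine problems are in the retraced mechanism for $c_0$, which you rightly identify as the real work.

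First, the triangle inequality does not yield $((1-\rho)a)^2$. Assumption~\ref{ass:residual_ampl} bounds $R$ in sup norm, but the loss is $L^2$. In the normalized circle measure, $\dist(a\sin(\omega\varphi),\mathcal T_L)=a/\sqrt2$, while $\norm{R}$ can be as large as $\rho a$. So $\dist(f+g,V)\ge\dist(f,V)-\norm{g}$ gives only $a(1/\sqrt2-\rho)$, which is vacuous for $\rho\ge 1/\sqrt2$; the paper's sketch makes the same slip. Positivity for every $\rho<1$ is true, but it needs an alternation argument. If $a\sin(\omega\varphi)+R=p\in\mathcal T_L$, then $\norm{p-a\sin(\omega\varphi)}_\infty\le\rho a<a$ forces $p$ to share the sign of the sinusoid at its $2\omega$ extrema. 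Hence $p$ has at least $2\omega>2L$ zeros, so $p\equiv 0$, which violates the bound at the extrema. A uniform $c_0$ then follows from weak compactness of $\{\norm{R}_\infty\le\rho a\}$ in $L^2$ together with weak lower semicontinuity of the (convex, continuous) distance. No $\rho$-independent $(1-\rho)^2$ rate exists: the clipped residual $R=-a\max(-\rho,\min(\rho,\sin(\omega\varphi)))$ gives $\norm{a\sin(\omega\varphi)+R}^2=O(a^2(1-\rho)^{5/2})$.

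Second, your partial-arc fix fails. On an arc, $\mathcal T_L$ is not approximately orthogonal to $\sin(\omega\varphi)$, and the leakage is not small. For large $\omega$, on an arc of length $2\pi/\omega$ the restriction of $\mathcal T_2$ is essentially the space of quartic polynomials. Best approximation from that space leaves only about $1\%$ of the energy of one period of a sinusoid. So an arc of one period gives almost no lower bound, however large $\omega$ is. What matters is how many sign alternations of the target fit inside the observed arc. The tool is the zero count above: a nonzero element of $\mathcal T_L$ has at most $2L$ zeros on the circle. Applied directly to $\sin(kx_0+k\Delta z\tan\alpha)$ on an observed range $[\alpha_1,\alpha_2]$, this needs neither the linearization $\tan\alpha\approx\alpha$ nor integrality of the frequency. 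It gives a threshold of order $k\Delta z(\tan\alpha_2-\tan\alpha_1)/(2\pi)$, which is the honest content of $\omega_{\mathrm{fake}}\asymp k\Delta z$. The paper avoids this issue only by working on the full circle with integer $\omega$ and the linearized phase.
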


This theorem rigorously partitions the angular capacity of 3DGS into three distinct regimes. When $L$ is too small, the model cannot fit the true specular residual $R$, yielding high error for all candidate geometries. In the \emph{intermediate regime}, $A(L)$ collapses as the true smooth residual is approximated, yet $B_U(L)$ remains trapped by the $L<\omega_{\mathrm{fake}}$ barrier, creating the optimal geometry-identifiable window. However, once $L \gtrsim \omega_{\mathrm{fake}}$, this lower-bound mechanism vanishes, clarifying the opaque billboard paradox introduced earlier. As proven in the supplementary material, the SH basis becomes expressive enough to memorize the parallax-induced false texture, allowing a misaligned surface to achieve exactly zero population loss. In this over-parameterized regime, the optimizer is mathematically permitted to ``cheat'': it can erect an arbitrary, vision-blocking sheet in empty space and task the high-degree Spherical Harmonics with projecting the missing background spatial frequencies as rapid, view-dependent color shifts. 

\section{Experiment}

\paragraph{Datasets.}
We construct a synthetic dataset of 100 shapes (25 base geometries $\times$ 4 variants) to systematically trigger and evaluate the failure modes predicted in Sec.~5--7. Real-world benchmarks typically feature rich, high-frequency spatial textures (large $k$), which naturally drive the required angular frequency $\omega_{\mathrm{fake}} \approx k\Delta z$ beyond typical SH capacities. By using synthetic shapes with tunable complexity under controlled lighting, we can deliberately evaluate the vulnerability to billboard collapse.

We additionally evaluate the same diagnostic on ten real-scene datasets with available geometry where 3DGS can be trained (Section~\ref{sec:real_exp}). 
These real-world experiments probe whether opaque billboard failures arise in practice when using high SH degrees under standard multi-view capture protocols. 
Under our labeling criterion, we find no opaque-billboard cases: across these datasets the learned Gaussians remain surface-consistent even at large angular capacity, in line with the ``safe'' regime predicted by our theory.

We train 3DGS with default hyperparameters and vary only the SH degree L. Geometry is evaluated by normalized symmetric Chamfer distance (nCD) between Gaussian centers and GT surfaces; appearance by PSNR. We define “opaque billboard” as runs with near-best PSNR but severely degraded nCD; full protocol and thresholds and the definitions of billboard failure, billboard onset, outlier point removal are in the supplementary material.

\begin{figure}[ht!]
    \centering
    \includegraphics[width=0.49\textwidth]{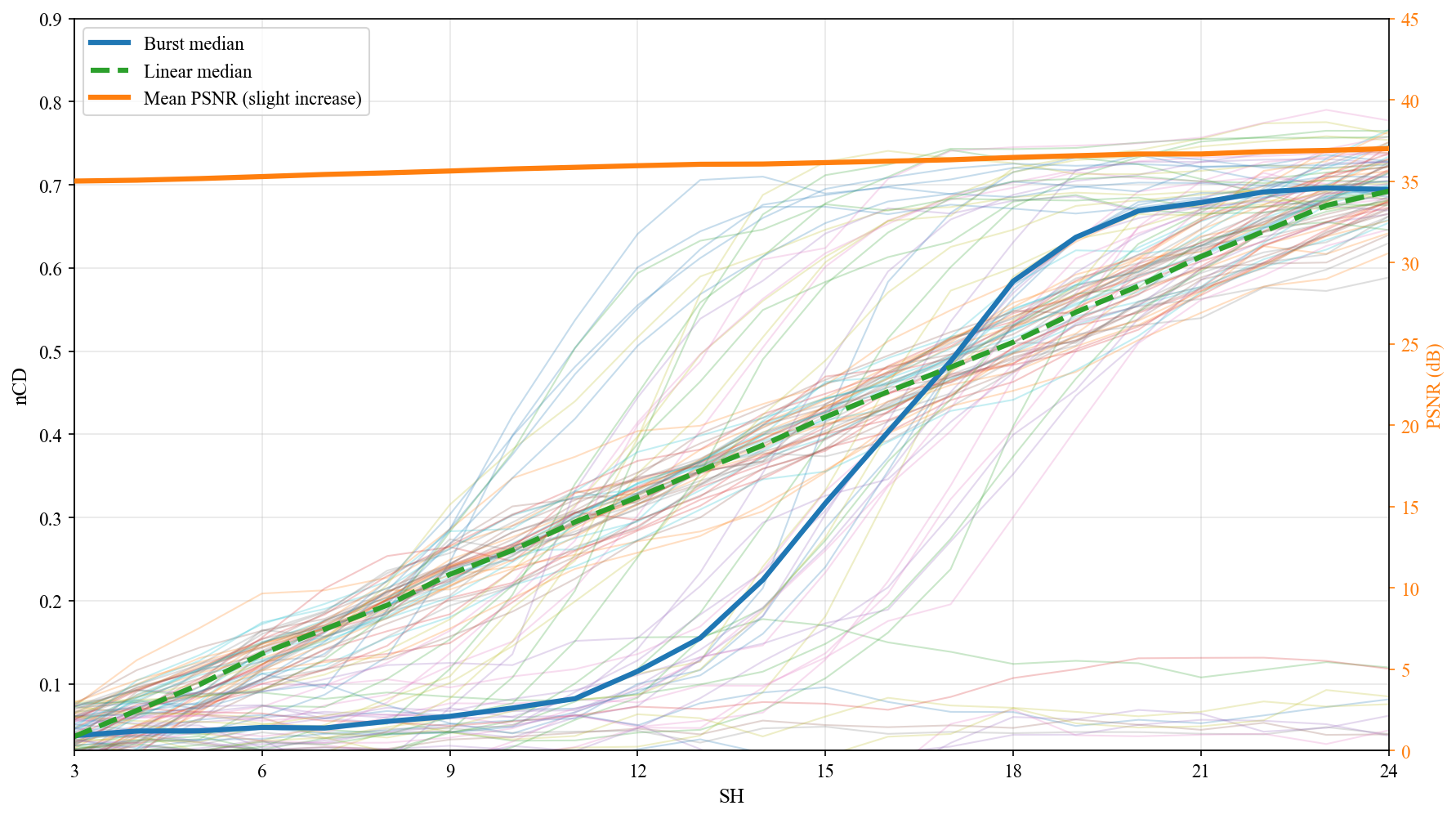}\hfill
    \includegraphics[width=0.49\textwidth]{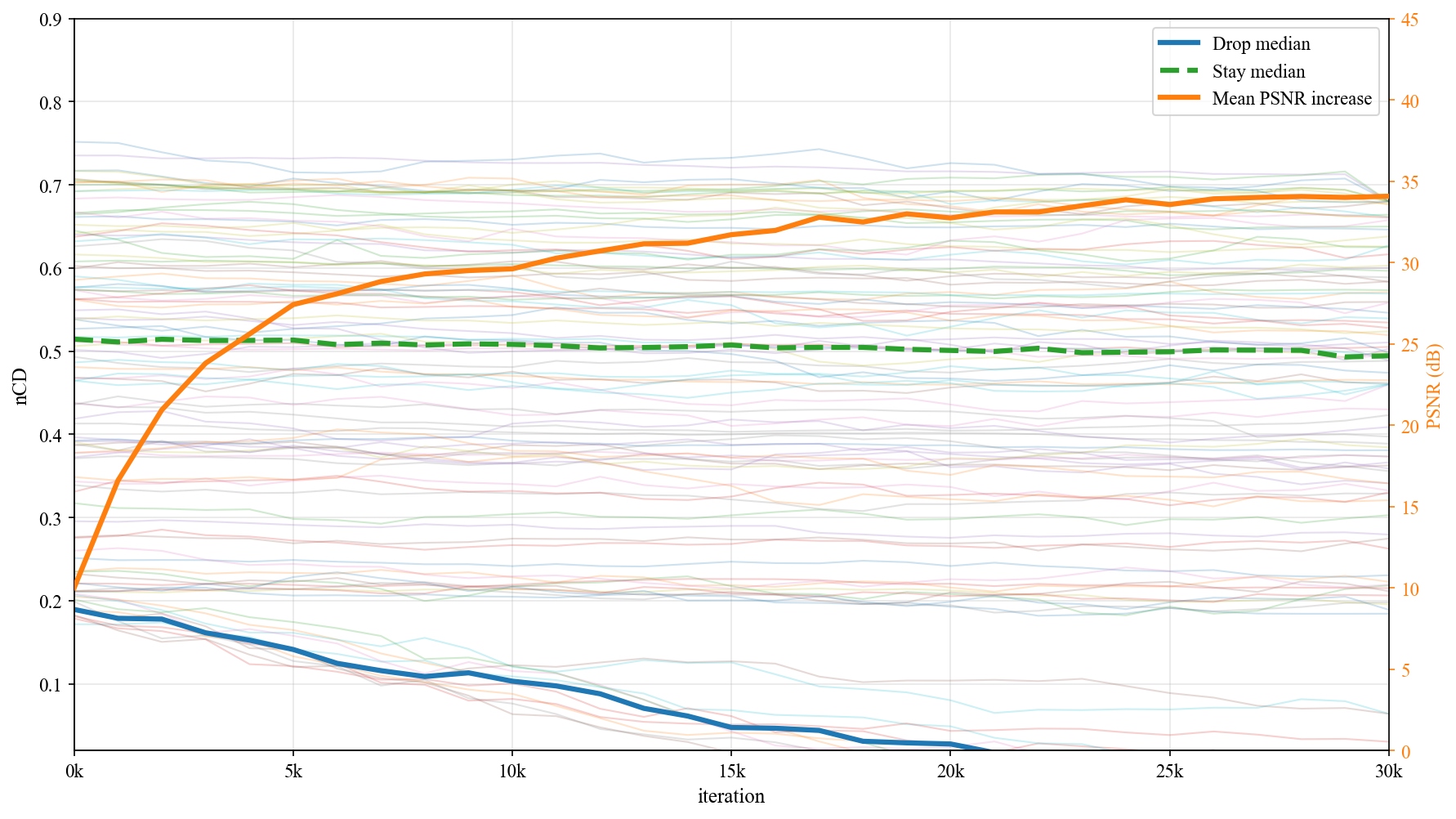}
    \caption{\textbf{Left: nCD vs.~SH degree.} Thin lines: per-shape nCD (left y-axis, lower is better); orange: mean PSNR (right y-axis). Solid blue: "burst" median; dashed green: "linear" median. \textbf{Right: nCD vs.~iteration (SH=24).} Thin lines: per-shape nCD (left y-axis, lower is better); orange: mean PSNR (right y-axis). Solid blue: "drop" median; dashed green: "stay" median.}
    \label{fig:cd_vs_sh}
    \label{fig:cd_vs_it}
\end{figure}

\subsection{Opaque billboard failure at large angular capacity}
Section~7 predicts that when the angular capacity $L$ exceeds the induced frequency threshold $\omega_{\mathrm{fake}}$, opaque billboard solutions can perfectly minimize the training objective. Our synthetic experiments strongly support this prediction: when $L$ is made sufficiently large, we frequently observe opaque billboard failures, where 3DGS no longer reconstructs the true surface but still produces high-quality renderings. The observed dense layers are consistent with our first-hit regime, but do not imply all 3DGS rendering behaves this way. As shown in Fig.~\ref{fig:cd_vs_sh}, we observe three consistent patterns. (a) As the SH degree increases, approximately ~89\% of shapes generally increase in nCD between the Gaussian centers and the ground-truth surface, while the average PSNR slightly increases. For these shapes, the high-SH runs are labeled as either billboard-onset or opaque billboard under our criterion: nCD grows to several times $\text{nCD}_\text{best}$ even though PSNR stays within $2$\,dB of $\text{PSNR}_\text{best}$. This indicates that as the angular capacity grows, the Gaussians gain enough representational power that they no longer need to stay on the surface to provide good renderings. (b) Among the shapes that drift away from the surface, we see two dominant behaviors, which correspond to the two medians in Fig.~\ref{fig:cd_vs_sh}. For roughly ~42\% of cases, nCD grows almost linearly with SH degree (the dashed green ``linear'' median), leading to billboard-onset or opaque-billboard labels only at the highest SH degrees. For roughly ~53\% of cases, nCD remains small up to a moderate SH degree and then undergoes a sharp increase once SH enters the range 6--12 (the solid blue ``burst'' median). In these ``burst'' cases the high-SH runs are almost always labeled as opaque billboard: beyond a certain angular capacity, the Gaussians can fit the appearance almost entirely by adjusting their SH coefficients without moving in space. (c) Finally, Fig.~\ref{fig:cd_vs_sh} also shows that, for a subset of objects, no opaque-billboard regime appears: all runs for these shapes remain surface-consistent under our labeling criterion. Even at high SH degrees, nCD stays within $2\times \text{nCD}_\text{best}$ and the learned Gaussians stay close to the ground-truth surface. These cases match the ``safe'' identifiability window predicted by our theory, where higher spatial complexity (large $k$) pushes the required angular frequency beyond our tested capacities, which effectively mitigates this failure mode in practical settings.

\begin{table}[t]
\centering
\caption{\textbf{Geometry and appearance metrics across real-scene datasets}
We report $\text{nCD} := \text{CD} / \mathrm{diag}(\mathrm{bbox}(\mathrm{GT}))$ (lower is better)
and PSNR (higher is better). All values are our measurements; citations point to dataset sources. }
\label{tab:ncd_real_2x10}
\setlength{\tabcolsep}{4pt}
\resizebox{\linewidth}{!}{%
\begin{tabular}{lcccc}
\toprule
\textbf{Dataset} & \textbf{nCD@SH=3} & \textbf{PSNR@SH=3} & \textbf{nCD@SH=24} & \textbf{PSNR@SH=24} \\
\midrule
DTU~\cite{Aanaes2016LargeScaleMVS}                & 0.0090 & 31.6 & 0.0100 & 32.3 \\
MobileBrick~\cite{Li2023MobileBrick}             & 0.0600 & 28.9 & 0.0700 & 29.5 \\
YCB-Video~\cite{Xiang2018PoseCNN}                & 0.0420 & 28.1 & 0.0500 & 28.8 \\
T-LESS~\cite{Hodan2017TLESS}                     & 0.0650 & 27.6 & 0.0750 & 28.1 \\
ScanNet~\cite{Dai2017ScanNet}                    & 0.0110 & 29.3 & 0.0120 & 29.9 \\
ScanNet++~\cite{Yeshwanth2023ScanNetpp}          & 0.0070 & 29.8 & 0.0080 & 30.3 \\
Matterport3D~\cite{Chang2017Matterport3D}        & 0.0045 & 30.4 & 0.0050 & 31.0 \\
HM3D~\cite{Ramakrishnan2021HM3D}                 & 0.0047 & 30.8 & 0.0053 & 31.4 \\
Tanks \& Temples~\cite{Knapitsch2017TanksAndTemples} & 0.0080 & 29.0 & 0.0090 & 29.7 \\
ETH3D~\cite{Schoeps2017ETH3D}                    & 0.0090 & 30.1 & 0.0100 & 30.6 \\
\midrule
\textbf{Average}                                 & 0.0220 & 29.6 & 0.0254 & 30.2 \\
\bottomrule
\end{tabular}%
}
\end{table}

\subsection{Real-scene experiments}
\label{sec:real_exp}

Do opaque billboard failures occur on real objects under standard capture protocols?
To answer this question, we train 3DGS with varying SH degrees on ten real-scene datasets with ground-truth geometry where 3DGS can be applied: 
DTU~\cite{Aanaes2016LargeScaleMVS}, 
MobileBrick~\cite{Li2023MobileBrick}, 
YCB-Video~\cite{Xiang2018PoseCNN}, 
T-LESS~\cite{Hodan2017TLESS}, 
ScanNet~\cite{Dai2017ScanNet}, 
ScanNet++~\cite{Yeshwanth2023ScanNetpp}, 
Matterport3D~\cite{Chang2017Matterport3D}, 
HM3D~\cite{Ramakrishnan2021HM3D}, 
Tanks \& Temples~\cite{Knapitsch2017TanksAndTemples}, 
and ETH3D~\cite{Schoeps2017ETH3D} (Table~\ref{tab:ncd_real_2x10}).
For each dataset we apply the same billboard labeling criterion as in the synthetic experiments.

Across all objects and SH degrees, we do not observe any run that satisfies the opaque-billboard condition: no run simultaneously has PSNR within $2$\,dB of the per-object $\text{PSNR}_\text{best}$ and nCD at least $5\times$ $\text{nCD}_\text{best}$. 
Under this PSNR band, nCD remains tightly concentrated: as summarized in Table~\ref{tab:ncd_real_2x10}, for every dataset the nCD at $\mathrm{SH}=24$ differs from that at $\mathrm{SH}=3$ by at most a factor of about $1.2\times$. 
In other words, increasing angular capacity in these real-scene settings does not lead to the large geometric drifts that characterize opaque billboard failures.

Thus, within standard multi-view capture protocols on these diverse real-world benchmarks, the learned Gaussians remain surface-consistent according to our criterion. This empirical finding is consistent with our theoretical frequency scale law: typical real-scene setups are characterized by rich spatial textures (very large $k$). This naturally drives the parallax-induced frequency $\omega_{\mathrm{fake}} \approx k\Delta z$ so high that even large capacities like $\mathrm{SH}=24$ cannot bridge the gap ($L \ll \omega_{\mathrm{fake}}$). Consequently, the model remains within the geometry-identifiable window in our tested settings, favoring surface-consistent solutions over billboard collapse.

\section{Conclusion \& Discussion}\label{sec:conclusion}

Increasing angular capacity is a double-edged sword: it helps fit specularities, but can let misaligned geometry explain away parallax-induced texture. This yields an intermediate-$L$ identifiability window and supports capacity scheduling, e.g., starting with low $L$. Our guarantees apply to mostly opaque scenes with smooth view-dependent residuals; mirror-like, transparent, or strongly camera-processed scenes may require geometry or material priors.

\paragraph{Effect of geometric regularization.}
Our analysis studies the image reconstruction objective with bounded angular radiance capacity. Geometry-aware variants such as surface-aligned Gaussian methods, 2D Gaussian formulations, SDF-coupled methods, or normal/depth regularization modify this problem by either restricting the admissible geometry class or adding explicit penalties against off-surface solutions. Such priors can shrink or remove the billboard-favorable regime predicted by our theorem. Thus, our results should not be read as a claim that geometry-aware methods fail in the same way; rather, they explain why such geometric priors are valuable when angular appearance capacity is large.

% References (minimal placeholder)
% ============================================================
\bibliographystyle{splncs04}
\bibliography{main}

% ============================================================
% Supplementary material is included in the same arXiv PDF.
% ============================================================
\clearpage
\appendix
\section*{Supplementary Material}
\addcontentsline{toc}{section}{Supplementary Material}
\renewcommand*{\theHsection}{app.\thesection}
\renewcommand*{\theHsubsection}{app.\thesubsection}

\section{Proof of the Pointwise Projection Proposition}
\label{app:proj}
% ------------------------------------------------------------

Here we prove the pointwise projection proposition in the main paper.

\subsection{Measurable setup and induced hit-point measures}

Let the \emph{ray space} be
\[
\mathcal{R}:=\R^3\times \Sph,
\]
and let $(Y,\hat V)\sim \PP$ be the dataset ray distribution. Fix a candidate surface
$U\subset\R^3$ and assume the first-hit map
\[
x_U:\ \mathcal{R}\to U,\qquad (y,\hat v)\mapsto x_U(y,\hat v)
\]
is measurable on the subset of rays for which the first hit exists and is unique.\footnote{If
non-uniqueness occurs on a $\PP$-null set (e.g., tangential rays), it can be resolved by any measurable
tie-breaking rule without affecting the population loss.}

Define the random hit-point variable
\[
X_U := x_U(Y,\hat V)\in U,
\]
and the joint pushforward measure on $U\times \Sph$,
\[
\nu_U \ :=\ (X_U,\hat V)_\#\PP,
\]
i.e., for any bounded measurable $\varphi:U\times\Sph\to\R$,
\[
\int_{U\times\Sph} \varphi(x,\hat v)\, d\nu_U(x,\hat v)
\ =\
\E_{(Y,\hat V)\sim\PP}\big[\varphi\big(X_U,\hat V\big)\big].
\]
Let $\mu_U$ denote the marginal of $\nu_U$ on $U$:
\[
\mu_U := (\pi_U)_\# \nu_U, \qquad \pi_U(x,\hat v)=x.
\]
By the disintegration theorem (existence of regular conditional probabilities on standard Borel
spaces), there exists a $\mu_U$-a.e.\ defined family of conditional probability measures
$\{\nu_U(\cdot\mid x)\}_{x\in U}$ on $\Sph$ such that for every bounded measurable $\psi$,
\begin{equation}
\int_{U\times\Sph} \psi(x,\hat v)\, d\nu_U(x,\hat v)
=
\int_U\left(\int_{\Sph}\psi(x,\hat v)\, d\nu_U(\hat v\mid x)\right)\, d\mu_U(x).
\label{eq:disintegration}
\end{equation}

\subsection{Replacing raw pixels by the through-seen target via conditional expectation}

Recall the ground-truth pixel random variable along a sampled ray:
\[
Z := I^*(Y,\hat V)\in\R \quad (\text{or }\R^3\text{ channel-wise}).
\]
For fixed geometry $U$, define the \emph{through-seen ground-truth target} as the regular
conditional expectation
\begin{equation}
F_U(x,\hat v)\ :=\ \E\!\left[\,Z\ \middle|\ X_U=x,\ \hat V=\hat v\,\right],
\label{eq:FU_appendix}
\end{equation}
interpreted as any $\nu_U$-a.e.\ version of the conditional expectation
$\E[Z\,|\,\sigma(X_U,\hat V)]$ evaluated at $(x,\hat v)$.

For any admissible appearance function $g$ on $U$ (with $g(x,\cdot)\in\cV_L$ for all $x$),
the population loss is
\[
\cJ(U,g)
=\E\Big[\big|g(X_U,\hat V)-Z\big|^2\Big].
\]
Applying the tower property conditional on $(X_U,\hat V)$ gives
\begin{align}
\cJ(U,g)
&=\E\Big[\,\E\big[\big|g(X_U,\hat V)-Z\big|^2\ \big|\ X_U,\hat V\big]\,\Big]\\
&=\E\Big[\ \big|g(X_U,\hat V)-F_U(X_U,\hat V)\big|^2
\ +\ \Var\!\big(Z\mid X_U,\hat V\big)\ \Big].
\label{eq:tower_mse}
\end{align}
The conditional variance term is independent of $g$. Hence minimizing $\cJ(U,g)$ over admissible
$g$ is equivalent to minimizing
\begin{equation}
\E\Big[\big|g(X_U,\hat V)-F_U(X_U,\hat V)\big|^2\Big]
=
\int_{U\times\Sph} \big|g(x,\hat v)-F_U(x,\hat v)\big|^2\, d\nu_U(x,\hat v).
\label{eq:loss_with_FU}
\end{equation}

\subsection{Pointwise decoupling of the minimization and the weighted projection}

Using the disintegration \eqref{eq:disintegration} in \eqref{eq:loss_with_FU}, we rewrite:
\begin{equation}
\cJ(U,g) \equiv
\int_U
\underbrace{
\left[
\int_{\Sph}\big|g(x,\hat v)-F_U(x,\hat v)\big|^2\, d\nu_U(\hat v\mid x)
\right]}_{=:~\Phi_x(g(x,\cdot))}
\, d\mu_U(x)
\quad +\ \text{const}(U).
\label{eq:loss_pointwise}
\end{equation}
Crucially, for fixed geometry $U$, the function $g(x,\cdot)$ appears \emph{only} inside the term
$\Phi_x(\,\cdot\,)$ indexed by the same point $x$. Therefore, the global minimization decouples:
\begin{equation}
\inf_{g:\ g(x,\cdot)\in\cV_L}\ \cJ(U,g)
\quad\Longleftrightarrow\quad
\text{for $\mu_U$-a.e.\ $x$, solve }\ \min_{h\in\cV_L}\ \Phi_x(h).
\label{eq:decouple}
\end{equation}

For each fixed $x$, define the weighted Hilbert space
$L^2(\Sph,\nu_U(\cdot\mid x))$ with inner product
\[
\langle f,h\rangle_x
:=
\int_{\Sph} f(\hat v)\,h(\hat v)\, d\nu_U(\hat v\mid x),
\qquad
\|f\|_x^2:=\langle f,f\rangle_x.
\]
Since $\cV_L$ is finite-dimensional, it is a closed subspace of this Hilbert space, and the standard
least-squares theorem implies:
\begin{equation}
h_x^*(\cdot)\ :=\ \arg\min_{h\in\cV_L}\ \|h-F_U(x,\cdot)\|_x^2
\ =\ \Pi^{(x)}_{\cV_L}\big(F_U(x,\cdot)\big),
\label{eq:proj_solution}
\end{equation}
where $\Pi^{(x)}_{\cV_L}$ denotes the orthogonal projection w.r.t.\ $\langle\cdot,\cdot\rangle_x$.

Define the global minimizer pointwise by $g^*(x,\cdot):=h_x^*(\cdot)$. Plugging back into
\eqref{eq:loss_pointwise} yields the minimal achievable loss:
\begin{equation}
\inf_{g:\ g(x,\cdot)\in\cV_L}\ \cJ(U,g)
=
\E_{x\sim\mu_U}\Big[\,\dist\big(F_U(x,\cdot),\cV_L\big)^2\,\Big]
\quad +\ \text{const}(U),
\label{eq:min_loss_dist}
\end{equation}
where the distance is taken in the weighted $L^2(\Sph,\nu_U(\cdot\mid x))$ norm.

This proves the pointwise projection proposition in the main paper (absorbing the $g$-independent variance term into the
definition of $B_U(L)$, or equivalently by defining $F_U$ as the conditional expectation target).

\subsection{Explicit coefficient form (optional, useful for intuition)}

Write a basis of $\cV_L$ as $\{\psi_q\}_{q=1}^{Q}$ where $Q=(L+1)^2$ (e.g.\ $\psi_q=Y_{\ell m}$).
For fixed $x$, represent $h(\hat v)=\sum_{q=1}^Q a_q(x)\psi_q(\hat v)$. Then
\eqref{eq:proj_solution} is equivalent to the normal equations:
\begin{equation}
\sum_{q'=1}^Q G_{qq'}(x)\,a_{q'}(x)=b_q(x),
\qquad
G_{qq'}(x):=\langle\psi_q,\psi_{q'}\rangle_x,\quad
b_q(x):=\langle F_U(x,\cdot),\psi_q\rangle_x.
\label{eq:normal_equations}
\end{equation}
If $\nu_U(\cdot\mid x)$ is uniform on $\Sph$, then $G(x)$ is the identity for an orthonormal SH basis,
and $a_q(x)=b_q(x)$ are exactly the SH coefficients up to degree $L$.

% ------------------------------------------------------------
\section{Sobolev Smoothness and Spherical Harmonic Truncation}
\label{app:sobolev}
% ------------------------------------------------------------

This appendix provides the detailed derivation underlying the Sobolev truncation lemma in the main paper and explains
the origin of the factor $1+\ell(\ell+1)$ from the Laplace--Beltrami spectrum on $\Sph$.

\subsection{Spherical harmonics and Laplace--Beltrami eigenvalues}

Let $d\omega$ be the uniform area measure on $\Sph$. The (complex) spherical harmonics
$\{Y_{\ell m}\}_{\ell\ge 0,\ -\ell\le m\le \ell}$ form a complete orthonormal basis of
$L^2(\Sph,d\omega)$:
\[
\langle Y_{\ell m},Y_{\ell' m'}\rangle_{L^2(\Sph)}=\int_{\Sph} Y_{\ell m}(\hat v)\,\overline{Y_{\ell' m'}(\hat v)}\, d\omega(\hat v)
=\delta_{\ell\ell'}\delta_{mm'}.
\]
(Real spherical harmonics satisfy the same orthonormality and eigenvalue relations; all arguments
below apply verbatim.)

A fundamental identity is that $Y_{\ell m}$ are eigenfunctions of the spherical Laplacian
$\Delta_{\Sph}$:
\begin{equation}
\Delta_{\Sph} Y_{\ell m}\ =\ -\ell(\ell+1)\,Y_{\ell m}.
\label{eq:lap_eigs}
\end{equation}

\subsection{Spectral definition of Sobolev norms on the sphere}

For $f\in L^2(\Sph)$, write its SH expansion
\[
f(\hat v)=\sum_{\ell=0}^\infty\sum_{m=-\ell}^{\ell} f_{\ell m} Y_{\ell m}(\hat v),
\qquad
f_{\ell m}:=\langle f,Y_{\ell m}\rangle_{L^2(\Sph)}.
\]
Define the (spectral) Sobolev norm for $s\ge 0$ by
\begin{equation}
\|f\|_{H^s(\Sph)}^2
:=
\sum_{\ell=0}^\infty\sum_{m=-\ell}^{\ell}
\big(1+\ell(\ell+1)\big)^s\,|f_{\ell m}|^2.
\label{eq:sobolev_spectral}
\end{equation}
The factor $1+\ell(\ell+1)$ arises because $(I-\Delta_{\Sph})Y_{\ell m}=(1+\ell(\ell+1))Y_{\ell m}$
by \eqref{eq:lap_eigs}. For integer $s$, this norm is equivalent (up to constants) to the classical
definition involving tangential derivatives on the sphere.

\subsection{Proof of the Sobolev truncation lemma}

Here we prove the Sobolev truncation lemma in the main paper.

Let $P_{\le L}$ denote the $L^2(\Sph)$ orthogonal projection onto
\[
\cV_L=\mathrm{span}\{Y_{\ell m}:0\le \ell\le L,\ -\ell\le m\le \ell\},
\]
i.e.,
\[
(P_{\le L} f)(\hat v)=\sum_{\ell=0}^{L}\sum_{m=-\ell}^{\ell} f_{\ell m}Y_{\ell m}(\hat v).
\]
By orthonormality, the truncation error energy is exactly the discarded coefficient tail:
\begin{equation}
\|f-P_{\le L}f\|_{L^2(\Sph)}^2
=
\sum_{\ell=L+1}^\infty\sum_{m=-\ell}^{\ell}|f_{\ell m}|^2.
\label{eq:tail_energy}
\end{equation}
For every $\ell>L$, we have $(1+\ell(\ell+1))^s\ge (1+L(L+1))^s$, hence
\[
|f_{\ell m}|^2
\le (1+L(L+1))^{-s}\,(1+\ell(\ell+1))^s\,|f_{\ell m}|^2.
\]
Summing over $\ell>L$ and $m$ and using \eqref{eq:tail_energy} yields
\begin{align*}
\|f-P_{\le L}f\|_{L^2(\Sph)}^2
&\le
(1+L(L+1))^{-s}
\sum_{\ell=0}^\infty\sum_{m=-\ell}^{\ell}
(1+\ell(\ell+1))^s|f_{\ell m}|^2\\
&=
(1+L(L+1))^{-s}\,\|f\|_{H^s(\Sph)}^2,
\end{align*}
which is exactly the Sobolev truncation lemma in the main paper.

\subsection{Remark on non-uniform view-direction sampling (why we say ``up to constants'')}

In the paper, projections at each surface point $x$ are taken w.r.t.\ the conditional direction
distribution $\nu_U(\cdot\mid x)$, not necessarily the uniform measure $d\omega$.
If $\nu_U(\cdot\mid x)$ admits a density $w_x(\hat v)$ w.r.t.\ $d\omega$ satisfying
$0<w_{\min}\le w_x(\hat v)\le w_{\max}<\infty$ uniformly on the region of interest, then the
weighted norm $\|f\|_{L^2(\nu_U(\cdot\mid x))}$ is equivalent to $\|f\|_{L^2(d\omega)}$ with
constants $w_{\min},w_{\max}$. Consequently, all truncation bounds transfer to the weighted
inner products with only multiplicative constants depending on these density bounds, justifying
the ``up to view-sampling constants'' language in the true-surface upper-bound proposition in the main paper.

% ------------------------------------------------------------
\section{Parallax Calculus and Frequency Lower Bounds for Misaligned Surfaces}
\label{app:parallax}
% ------------------------------------------------------------

This appendix provides (i) the explicit parallax derivation behind the parallax hit-point relation in the main paper and
the induced-frequency relation in the main paper, (ii) a detailed proof of the great-circle bandlimit lemma in the main paper, and (iii) the
Fourier approximation lower bound used in the Fourier lower-bound lemma in the main paper and
the misaligned-surface lower-bound proposition in the main paper.

\subsection{Deriving the parallax hit-point relation}

Here we derive the parallax hit-point relation in the main paper.

Consider the standard 2D cross-section model (a local linearization of two nearby surface patches):
the camera center is at the origin, and we use coordinates $(x,z)$ in the viewing plane.

Let the \emph{true} surface be the line $z=z_{\mathrm{gt}}$ and the \emph{candidate} (misaligned)
surface be $z=z_{\mathrm{fake}}$. Define the depth gap
\[
\Delta z := z_{\mathrm{gt}}-z_{\mathrm{fake}}.
\]
Parameterize rays by an angle $\alpha$ (relative to the positive $z$-axis). A ray from the origin
with angle $\alpha$ has direction proportional to $(\tan\alpha,1)$, hence can be written as
\[
(x,z)=(t\tan\alpha,t),\qquad t>0.
\]
This ray intersects $z=z_{\mathrm{fake}}$ at $t=z_{\mathrm{fake}}$, with $x$-coordinate
\[
x_{\mathrm{fake}}(\alpha)=z_{\mathrm{fake}}\tan\alpha.
\]
Similarly, it intersects $z=z_{\mathrm{gt}}$ at $t=z_{\mathrm{gt}}$, with coordinate
\[
x_{\mathrm{gt}}(\alpha)=z_{\mathrm{gt}}\tan\alpha.
\]
Now fix a specific point on the fake surface with coordinate $x_0$.
For the ray that hits this fake point, we have $x_{\mathrm{fake}}(\alpha)=x_0$, hence
\[
x_0=z_{\mathrm{fake}}\tan\alpha.
\]
For that same ray, the true intersection point is
\begin{align}
x_{\mathrm{hit}}(\alpha)
&=x_{\mathrm{gt}}(\alpha)
=z_{\mathrm{gt}}\tan\alpha
=z_{\mathrm{fake}}\tan\alpha+(z_{\mathrm{gt}}-z_{\mathrm{fake}})\tan\alpha\\
&=x_0+\Delta z\,\tan\alpha,
\end{align}
which is the parallax hit-point relation in the main paper.

\subsection{From spatial texture to angular frequency}

Assume the diffuse texture on the true surface contains a sinusoidal component along the
tangential coordinate $u$:
\[
T(u)\ \supset\ a\sin(ku),
\qquad a>0,\ k>0,
\]
as in the texture assumption in the main paper. The through-seen target seen at the fake point $x_0$ (ignoring other
terms for the moment) includes
\[
a\sin\big(k\,x_{\mathrm{hit}}(\alpha)\big)
=
a\sin\!\big(k(x_0+\Delta z\tan\alpha)\big).
\]
On a moderate angular range where $|\alpha|$ stays bounded away from $\pi/2$, we may locally
linearize $\tan\alpha\approx \alpha$, yielding
\[
a\sin\!\big(kx_0 + (k\Delta z)\alpha\big),
\]
which is a sinusoid in the angular variable $\alpha$ with angular frequency
\[
\omega_{\mathrm{fake}}\ \approx\ k\Delta z,
\]
as stated in the induced-frequency relation in the main paper. This computation is local, and therefore applies to general
smooth surfaces by restricting to a sufficiently small patch and a sufficiently small viewing arc
where the planar approximation is valid.

\subsection{Proof of the great-circle bandlimit lemma}

Here we prove the great-circle bandlimit lemma in the main paper.

Let $h\in\cV_L$, so
\[
h(\hat v)=\sum_{\ell=0}^L\sum_{m=-\ell}^{\ell} c_{\ell m}Y_{\ell m}(\hat v).
\]
Because $\cV_L$ is rotation-invariant, it suffices to prove the claim for the equator
$\theta=\pi/2$ (any great circle can be rotated to the equator, and rotation preserves degree).

Write $\hat v$ in spherical coordinates $(\theta,\varphi)$. For the standard complex SH basis,
\[
Y_{\ell m}(\theta,\varphi)=N_{\ell m}\,P_\ell^m(\cos\theta)\,e^{im\varphi},
\]
so restricting to the equator ($\theta=\pi/2$, $\cos\theta=0$) yields
\[
Y_{\ell m}\!\left(\tfrac{\pi}{2},\varphi\right)=\underbrace{N_{\ell m}\,P_\ell^m(0)}_{\text{a constant in }\varphi}\,e^{im\varphi}.
\]
Therefore the restriction $\tilde h(\varphi):=h(\pi/2,\varphi)$ has the form
\[
\tilde h(\varphi)
=\sum_{\ell=0}^L\sum_{m=-\ell}^{\ell} \tilde c_{\ell m}\,e^{im\varphi}
=\sum_{m=-L}^{L} d_m e^{im\varphi},
\]
which is a trigonometric polynomial whose highest Fourier frequency is $L$.
The same conclusion holds for a real SH basis, since each real mode at order $m$ is a linear
combination of $\cos(m\varphi)$ and $\sin(m\varphi)$.

\subsection{Proof of the Fourier lower-bound lemma}

Here we prove the Fourier lower-bound lemma in the main paper.

Consider the space of trigonometric polynomials of degree at most $L$:
\[
\mathcal{T}_L := \mathrm{span}\{e^{im\varphi}:\ |m|\le L\}
\subset L^2([0,2\pi]).
\]
Let $f(\varphi)=a\sin(\omega\varphi)$ with $a\neq 0$ and integer $\omega$. Using
\[
\sin(\omega\varphi)=\frac{e^{i\omega\varphi}-e^{-i\omega\varphi}}{2i},
\]
we see $f$ lives entirely in the frequency modes $\pm\omega$. If $\omega>L$, then $f$ is orthogonal
to all basis functions $e^{im\varphi}$ with $|m|\le L$, hence orthogonal to the entire subspace
$\mathcal{T}_L$. Therefore its orthogonal projection onto $\mathcal{T}_L$ is the zero function,
and the best approximation error equals its own energy:
\[
\inf_{p\in\mathcal{T}_L}\ \|f-p\|_{L^2([0,2\pi])}^2
=\|f\|_{L^2([0,2\pi])}^2
=
\int_0^{2\pi} a^2\sin^2(\omega\varphi)\,d\varphi
=
a^2\pi.
\]
This proves the Fourier lower-bound lemma in the main paper.

\subsection{Robustness to additional terms (triangle inequality for distances to a subspace)}

A simple inequality repeatedly used in the fake-geometry argument is:
\begin{equation}
\dist(f+g,V)\ \ge\ \dist(f,V)\ -\ \|g\|,
\label{eq:dist_triangle}
\end{equation}
for any normed space, any subset $V$, and any functions $f,g$.
Indeed, for any $v\in V$,
\[
\|f+g-v\|\ge \|f-v\|-\|g\|,
\]
and taking the infimum over $v\in V$ yields \eqref{eq:dist_triangle}.

In our setting, $f$ is the transported high-frequency component from the diffuse texture,
and $g$ collects all remaining terms (including the residual $R$ and other texture frequencies).
Under the residual-amplitude assumption in the main paper, the $g$ term cannot completely cancel the amplitude of $f$ on the
relevant set of rays, so the constant-scale lower bound survives.

\subsection{From pointwise lower bounds to a global wrong-surface lower bound}

To connect the 1D Fourier lower bound to $B_U(L)$ in the misaligned-surface lower-bound proposition in the main paper, one uses:

\begin{itemize}
\item The pointwise projection characterization from the pointwise projection proposition in the main paper, which reduces
$B_U(L)$ to an average of pointwise squared distances
$\dist(F_U(x,\cdot),\cV_L)^2$ under $\mu_U$.
\item The parallax computation above, which exhibits (on a visible patch with nonzero probability)
a great-circle restriction of $F_U(x,\cdot)$ containing a sinusoid at frequency
$\omega_{\mathrm{fake}}\asymp k\Delta z$ (modulo the mild-angle approximation).
\item The bandlimit lemma the great-circle bandlimit lemma in the main paper, implying any $g(x,\cdot)\in\cV_L$
restricts to a trigonometric polynomial of degree at most $L$ on that viewing circle.
\item The Fourier lower bound the Fourier lower-bound lemma in the main paper, implying an unavoidable constant residual
whenever $L<\omega_{\mathrm{fake}}$, which yields a strictly positive lower bound on
$\dist(F_U(x,\cdot),\cV_L)^2$ for those $x$ and directions.
\end{itemize}

Finally, since the ray distribution $\PP$ assigns positive probability mass to that visible patch
(the texture assumption in the main paper and the misalignment assumption in the main paper), the global expectation defining $B_U(L)$ inherits a
strictly positive constant lower bound $c_0>0$ for all $L<\omega_{\mathrm{fake}}$, completing the
derivation behind the misaligned-surface lower-bound proposition in the main paper.

\section{Numeric Example for the Scale Law}\label{app:numeric_example}

To make the threshold $\omega_{\mathrm{fake}}\approx k\Delta z$ tangible, consider a scene featuring a brick wall with grout lines spaced $p=5$mm ($0.005$m) apart. The spatial angular frequency of this texture component is:
\[
k=\frac{2\pi}{p}\approx \frac{2\pi}{0.005}\approx 1256\ \text{rad/m}.
\]
If the optimizer attempts to place a floater just $5$cm ($0.05$m) in front of this wall, the parallax effect demands an induced angular frequency of:
\[
\omega_{\mathrm{fake}}\approx k\Delta z\approx 1256\times 0.05\approx 62.8.
\]
To fake this specific geometry, the wrong surface must express angular variations on the order of frequency $60$. In stark contrast, the true surface does not need to convert this spatial texture into an angular signal at all; it merely needs to fit the residual $R$, which is typically much smoother.

This calculation is not intended to establish $L=60$ as a universal threshold. Rather, it illustrates the severe scaling dynamics of the $\omega_{\mathrm{fake}}$ requirement. Fine textures combined with even modest geometric misalignments easily push the required angular frequency far beyond typical low SH degrees (e.g., $L=2$ or $3$), creating a robust and highly practical intermediate window where the true geometry is uniquely identifiable.

% ============================================================

% ------------------------------------------------------------
\section{Large-$L$ Achievability: Billboards Can Match the Data}
\label{app:largeL}
% ------------------------------------------------------------

This appendix complements the lower bound in Appendix~\ref{app:parallax} by
showing that once the angular capacity exceeds the parallax-induced frequency
$\omega_{\mathrm{fake}}$, a misaligned opaque billboard can fit the same images
with equally low (indeed, zero) population loss under the first-hit model.

\subsection{An idealized linear-parallax billboard setup}

We work in the same 2D cross-section as Appendix~\ref{app:parallax}.
Let the true surface be the line $S=\{z=z_{\mathrm{gt}}\}$ and the candidate
billboard be the line $U=\{z=z_{\mathrm{fake}}\}$ with depth gap
$\Delta z:=z_{\mathrm{gt}}-z_{\mathrm{fake}}>0$.

Assume the ground-truth appearance is \emph{purely diffuse} with a single spatial
Fourier mode along the tangential coordinate:
\begin{equation}
B(x,\hat v)=T(x)=a\sin(kx),\qquad a>0,\ k>0.
\label{eq:diffuse_single_mode}
\end{equation}
(Thus $A(L)=0$ for all $L\ge 0$ since $T$ is view-independent.)

Fix a point $x_0$ on the billboard $U$. Let the set of training rays that hit
$x_0$ on $U$ arrive from a one-parameter family of viewing directions lying on
a great circle, parameterized by an angle $\varphi\in[0,2\pi]$.
On this family we use the same \emph{linearized parallax} relation as in the main text,
i.e.\ we replace $\tan(\cdot)$ by its first-order approximation:
\begin{equation}
x_{\mathrm{hit}}(\varphi)=x_0+\Delta z\,\varphi.
\label{eq:linear_parallax}
\end{equation}
Therefore the through-seen target at the billboard point $x_0$ is
\begin{equation}
F_U(x_0,\varphi)=T\!\left(x_{\mathrm{hit}}(\varphi)\right)
=a\sin\!\big(kx_0+(k\Delta z)\varphi\big).
\label{eq:FU_billboard_sinusoid}
\end{equation}
Define the induced angular frequency
\begin{equation}
\omega_{\mathrm{fake}}:=k\Delta z.
\label{eq:omega_def_exact}
\end{equation}

\subsection{A surjectivity lemma: degree-$L$ SH can realize any degree-$L$ trigonometric polynomial on a great circle}

\begin{lemma}[Realizing trigonometric polynomials on a great circle]
\label{lem:circle_surject}
Let $C$ be any great circle on $\mathbb S^2$ with arc-length parameter $\varphi\in[0,2\pi]$.
For any trigonometric polynomial
$p(\varphi)=\sum_{m=-L}^{L} d_m e^{im\varphi}$,
there exists a spherical harmonic polynomial $h\in\mathcal V_L$
such that $h$ restricted to $C$ equals $p$:
$h|_C(\varphi)=p(\varphi)$ for all $\varphi$.
\end{lemma}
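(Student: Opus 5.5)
By rotation invariance of $\cV_L$ it suffices to treat one fixed great circle. Any great circle $C$ is the image of the equator $\{\theta=\pi/2\}$ under some rotation $\mathcal R\in SO(3)$. If $h_0\in\cV_L$ restricts to $p$ on the equator, then $h:=h_0\circ\mathcal R^{-1}$ restricts to $p$ on $C$, with the arc-length parameter transported accordingly. Moreover $h\in\cV_L$, because each degree-$\ell$ eigenspace of $\Delta_{\Sph}$ is rotation invariant. This is the same reduction used in the proof of \Cref{lem:greatcircle-bandlimit}.

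On the equator, the natural route is to realize each exponential $e^{im\varphi}$, $|m|\le L$, by a single degree-$|m|$ harmonic, and then use linearity. In Cartesian coordinates the equator is $\hat v=(\cos\varphi,\sin\varphi,0)$. Consider the homogeneous polynomial
\[
q_m(x,y,z):=(x+iy)^m,\qquad m\ge 0,
\]
and $(x-iy)^{|m|}$ for $m<0$. It has degree $|m|$, and it is harmonic in $\R^3$ because it is holomorphic in $x+iy$ and does not depend on $z$. The restriction to $\Sph$ of a harmonic homogeneous polynomial of degree $|m|$ is a spherical harmonic of degree $|m|$; equivalently $q_m|_{\Sph}\propto Y_{|m|,\pm|m|}$, the sectoral harmonics. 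On the equator it equals $e^{im\varphi}$ exactly. Therefore
\[
h:=\sum_{m=-L}^{L} d_m\,q_m|_{\Sph}\in\cV_L
\]
restricts to $p$.

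An equivalent check in the language of the supplementary proof of \Cref{lem:greatcircle-bandlimit} is to note that $Y_{|m|,\pm |m|}(\pi/2,\varphi)=N_{|m||m|}P_{|m|}^{|m|}(0)e^{\pm i|m|\varphi}$. Since $P_m^m(x)\propto(1-x^2)^{m/2}$, the constant $P_m^m(0)$ is nonzero, so we may divide by it. For the real SH basis used in 3DGS with real $p$, take real parts: $\cos(m\varphi)$ and $\sin(m\varphi)$ are realized by $\mathrm{Re}(x+iy)^m$ and $\mathrm{Im}(x+iy)^m$, both real harmonic polynomials of degree $m$.

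The only real obstacle is this last point: we must avoid choosing a degree $\ell=|m|$ for which the equatorial value of the associated Legendre function vanishes. Indeed $P_\ell^m(0)=0$ whenever $\ell-m$ is odd. The sectoral choice $\ell=|m|$ sidesteps this. The remaining verification, that the resulting $h$ is real and lies in $\cV_L$, is routine, and no surjectivity count or dimension argument is needed.
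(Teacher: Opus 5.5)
Your proposal is correct and follows essentially the same route as the paper: rotate to the equator, then realize each $e^{im\varphi}$ by the sectoral harmonic of degree $\ell=|m|$, whose equatorial value is a nonzero multiple of $e^{im\varphi}$ because $P_{|m|}^{|m|}(0)=(-1)^{|m|}(2|m|-1)!!\neq 0$. Your Cartesian form $(x\pm iy)^{|m|}$ is just a normalization-free expression of the same $Y_{|m|,\pm|m|}$, and your remark about taking real and imaginary parts for the real SH basis is a small point of care that the paper's complex-basis proof leaves implicit.
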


\begin{proof}
By rotation invariance of $\mathcal V_L$, it suffices to treat the equator
$C=\{(\theta,\varphi):\theta=\pi/2\}$.
For the complex SH basis,
$Y_{\ell m}(\theta,\varphi)=N_{\ell m}P_\ell^{m}(\cos\theta)e^{im\varphi}$.
Restricting to $\theta=\pi/2$ gives
$Y_{\ell m}(\pi/2,\varphi)=N_{\ell m}P_\ell^{m}(0)e^{im\varphi}$.
For each integer $m$ with $|m|\le L$, take $\ell=|m|$.
Then $P_{|m|}^{|m|}(0)=(-1)^{|m|}(2|m|-1)!!\neq 0$, hence
$Y_{|m|,m}(\pi/2,\varphi)=c_m e^{im\varphi}$ with $c_m\neq 0$.
Therefore, setting
$h(\hat v)=\sum_{m=-L}^{L} (d_m/c_m)\,Y_{|m|,m}(\hat v)$
yields $h\in\mathcal V_L$ and $h(\pi/2,\varphi)=p(\varphi)$.
\end{proof}

\subsection{Exact achievability once $L\ge \omega_{\mathrm{fake}}$}

\begin{proposition}[Billboard achieves zero loss for large enough $L$]
\label{prop:billboard_zero_loss}
In the setup above, assume $\omega_{\mathrm{fake}}=k\Delta z$ is an integer.
Then for any angular capacity $L\ge \omega_{\mathrm{fake}}$,
there exists an appearance function $g$ on the billboard $U$ with
$g(x,\cdot)\in\mathcal V_L$ for all $x\in U$ such that the first-hit predictions
match the ground truth almost surely:
\[
I_{U,g}(Y,\hat V)=I^*(Y,\hat V).
\]
Consequently, the optimal population loss is zero:
\[
B_U(L)=0.
\]
Since in the pure diffuse case $A(L)=0$ for all $L$, we have $A(L)=B_U(L)=0$ for all
$L\ge \omega_{\mathrm{fake}}$.
\end{proposition}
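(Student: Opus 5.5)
The construction is explicit, and it runs pointwise on the billboard. Fix a point $x_0\in U$. By \eqref{eq:FU_billboard_sinusoid}, the through-seen target along the great circle $C$ of incoming directions is $F_U(x_0,\varphi)=a\sin(kx_0+\omega_{\mathrm{fake}}\varphi)$. Expanding the sine gives
\[
F_U(x_0,\varphi)=a\sin(kx_0)\cos(\omega_{\mathrm{fake}}\varphi)+a\cos(kx_0)\sin(\omega_{\mathrm{fake}}\varphi).
\]
Because $\omega_{\mathrm{fake}}$ is an integer, this is a trigonometric polynomial whose only nonzero coefficients sit at the modes $m=\pm\omega_{\mathrm{fake}}$. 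These are
\[
d_{\pm\omega_{\mathrm{fake}}}=\mp\tfrac{ia}{2}e^{\pm ikx_0}.
\]
Since $L\ge\omega_{\mathrm{fake}}$, the polynomial has degree at most $L$. Lemma~\ref{lem:circle_surject} then supplies $h_{x_0}\in\cV_L$ with $h_{x_0}|_C=F_U(x_0,\cdot)$.

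Concretely, $h_{x_0}$ is a combination of the two harmonics $Y_{\omega_{\mathrm{fake}},\pm\omega_{\mathrm{fake}}}$. Its coefficients are $d_{\pm\omega_{\mathrm{fake}}}/c_{\pm\omega_{\mathrm{fake}}}$, which depend continuously (indeed smoothly) on $x_0$. I set $g(x_0,\cdot):=h_{x_0}$; for points of $U$ hit by no training ray, I set $g(x,\cdot)=0$. This $g$ is measurable in $x$, and $g(x,\cdot)\in\cV_L$ for every $x$, so $g$ is admissible.

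To verify $I_{U,g}=I^*$ almost surely, take any ray $(Y,\hat V)$ in the support of $\PP$. Its first hit on $U$ is $X_U=x_0$ for some $x_0$, and by the setup its direction lies on the circle $C$ at some parameter $\varphi$. Then $I_{U,g}(Y,\hat V)=g(x_0,\varphi)=F_U(x_0,\varphi)$. Under the linearized parallax model \eqref{eq:linear_parallax}, the ray's true hit is $x_{\mathrm{hit}}(\varphi)$, and the appearance is purely diffuse, so this value equals $T(x_{\mathrm{hit}}(\varphi))=I^*(Y,\hat V)$.

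It follows that $\cJ(U,g)=0$, and since $\cJ\ge0$, we get $B_U(L)=0$. Equivalently, in the language of Proposition~\ref{prop:pointwise-proj}, $F_U(x,\cdot)\in\cV_L$ on the support of $\nu_U(\cdot\mid x)$, so every projection distance vanishes. Note that $I^*$ is a deterministic function of $(X_U,\hat V)$ here, so the conditional-variance term is also zero.

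For the final claim, $T$ is view-independent, so $F_S(x,\cdot)=T(x)$ is constant and lies in $\cV_0\subset\cV_L$. Hence $A(L)=0$ for every $L$, and therefore $A(L)=B_U(L)=0$ whenever $L\ge\omega_{\mathrm{fake}}$.

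The main obstacle is bookkeeping rather than analysis. We need the idealized setup to state cleanly that all rays through $x_0$ have directions on one great circle and that the linearized relation is exact there. We also need the support of $\nu_U(\cdot\mid x_0)$ to lie in $C$, so that matching $F_U$ on $C$ alone suffices. I take both as part of the stated model rather than derive them. The only genuine technical point is measurability of $x\mapsto g(x,\cdot)$, which follows from the explicit formula for the coefficients.
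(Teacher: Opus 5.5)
Your proposal is correct and follows the paper's proof essentially step for step. You expand $F_U(x_0,\cdot)$ as a degree-$\omega_{\mathrm{fake}}$ trigonometric polynomial, realize it on the viewing great circle via Lemma~\ref{lem:circle_surject}, assemble $g$ pointwise, and conclude zero loss on every training ray. Your explicit coefficients, which make measurability of $x\mapsto g(x,\cdot)$ transparent, and your one-line justification of $A(L)=0$ via $F_S(x,\cdot)=T(x)\in\cV_0$ usefully fill in details the paper leaves implicit.
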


\begin{proof}
Fix $x_0\in U$. By \eqref{eq:FU_billboard_sinusoid},
the required angular signal at $x_0$ is the trigonometric polynomial
\[
F_U(x_0,\varphi)=a\sin(kx_0+\omega_{\mathrm{fake}}\varphi)
=a\sin(kx_0)\cos(\omega_{\mathrm{fake}}\varphi)+a\cos(kx_0)\sin(\omega_{\mathrm{fake}}\varphi).
\]
This has degree $\omega_{\mathrm{fake}}$ in $\varphi$. Since $L\ge \omega_{\mathrm{fake}}$,
Lemma~\ref{lem:circle_surject} implies there exists $h_{x_0}\in\mathcal V_L$
whose restriction to the viewing great circle equals $F_U(x_0,\varphi)$.
Define $g(x_0,\hat v):=h_{x_0}(\hat v)$.
Repeating this construction pointwise for each $x\in U$ yields a measurable
$g:U\times\mathbb S^2\to\mathbb R$ with $g(x,\cdot)\in\mathcal V_L$.

By construction, for every training ray that first hits $x_0$ on $U$ with direction
parameter $\varphi$, we have
$g(x_0,\hat v(\varphi))=F_U(x_0,\varphi)=I^*(y,\hat v(\varphi))$,
hence $|I_{U,g}-I^*|^2=0$ on that ray. Therefore the population MSE is zero,
so $B_U(L)=0$.
\end{proof}

\paragraph{Extension to multiple texture frequencies.}
If $T(x)$ is a finite Fourier series $T(x)=\sum_{j=1}^J a_j\sin(k_j x+\phi_j)$,
then the same proof shows $B_U(L)=0$ provided $L\ge \max_j (k_j\Delta z)$ (integer frequencies).

\section{Experimental Details and Protocols}\label{app:exp_details}
In this section, we provide the experimental protocols, evaluation metrics, and categorization criteria omitted from the main text due to space constraints.

\subsection{Training Setup and Optimization}
Across all synthetic and real-world experiments, we use the standard 3D Gaussian Splatting optimization pipeline. We keep the core optimization schedules, including densification intervals, opacity reset frequencies, and learning-rate decays, at their default values. The independent variable in our ablation is the maximum spherical-harmonics degree $L$, which we vary up to $24$. For high-capacity runs, the SH coefficients are optimized together with geometry to test whether the model uses excess angular capacity to fit appearance while leaving the true surface.

\subsection{Geometric Evaluation and Normalization}
We convert learned Gaussians into a point cloud by extracting their 3D centers. We first apply a similarity alignment (Umeyama algorithm) for global scale and rotation, followed by ICP alignment to the ground-truth mesh. We then sample $10^5$ points from the aligned Gaussian centers and $10^5$ points from the ground-truth surface.

To reduce sensitivity to distant floaters, we compute bidirectional point-to-point distances and truncate at the 99th percentile. We report normalized symmetric Chamfer distance,
\[
\text{nCD} = \frac{\text{CD}}{\text{diag}(\text{bbox}(\text{GT}))},
\]
where lower nCD indicates that Gaussian centers lie closer to the true surface.

\subsection{Billboard Labeling Criterion}
For each object, we identify the best appearance and geometry across all runs, denoted $\text{PSNR}_{\text{best}}$ and $\text{nCD}_{\text{best}}$. We classify each run as follows.

\paragraph{Opaque billboard.}
The run has near-optimal rendering quality,
\[
\text{PSNR} \ge \text{PSNR}_{\text{best}} - 2\ \mathrm{dB},
\]
but severely degraded geometry,
\[
\text{nCD} \ge 5 \times \text{nCD}_{\text{best}}.
\]

\paragraph{Billboard onset.}
The run maintains near-optimal rendering quality but has moderate geometric drift:
\[
2 \times \text{nCD}_{\text{best}} < \text{nCD} < 5 \times \text{nCD}_{\text{best}}.
\]

\paragraph{Surface-consistent.}
The run remains geometrically stable, with $\text{nCD} \le 2 \times \text{nCD}_{\text{best}}$.

\end{document}